\def\isjmlr{0}

\ifnum\isjmlr=1
    \documentclass[anon,12pt]{colt2026} 
    \pagenumbering{gobble}
\else
    \documentclass[11pt]{article}
\fi

\usepackage{xparse}  %

\let\oldproof\proof
\let\endoldproof\endproof

\ifnum\isjmlr=1
    \providecommand{\qedsymbol}{$\blacksquare$}

    \RenewDocumentEnvironment{proof}{o}{%
      \IfValueTF{#1}{%
        \noindent \textbf{#1}\ \ignorespaces
      }{%
        \oldproof
      }%
    }{%
      \IfValueTF{#1}{%
        \hfill \qedsymbol\par
      }{%
        \endoldproof
      }%
    }
\fi

\ifnum\isjmlr=0
    \usepackage{geometry}
    \geometry{margin=1in}
\fi

\usepackage{graphicx, macros} %

\ifnum\isjmlr=1
    \title[Refereed Learning]{\vspace*{1em} %
    Refereed Learning}
\else
    \title{Refereed Learning}
\fi

\ifnum\isjmlr=1
    \usepackage{times}
    \coltauthor{Anonymous submission to COLT 2026}
\else
    \author{Ran Canetti$^\ast$ \and Ephraim Linder$^\ast$ \and Connor Wagaman\thanks{Boston University, \texttt{\{canetti,ejlinder,wagaman\}@bu.edu.}}}
\fi

\date{March 11, 2026}

\begin{document}

\maketitle

\begin{abstract}
We initiate an investigation of learning tasks  in a setting where  the  learner is given access to two  competing provers, only one of which is honest. Specifically,  we consider the power of such learners in assessing purported properties of opaque models.
Following  prior work in complexity theory that considers the  power of competing provers in various settings,  we call this setting \emph{refereed learning}.

After formulating a general definition of refereed learning tasks, we show refereed learning protocols that obtain a level of accuracy that far exceeds what is obtainable at comparable cost without provers, or even with a single prover. We concentrate on the task of choosing the better one out of two \emph{black-box} models, with respect to some ground truth.
While we consider a range of parameters, perhaps our most notable result is in the high-precision range: For all $\varepsilon>0$ and ambient dimension $d$, our learner  makes only one query to the ground truth function,
communicates only $(1+\frac{1}{\varepsilon^2})\cdot\text{poly}(d)$ bits with the provers, and outputs a model whose loss is within a multiplicative factor of $(1+\varepsilon)$ of the best model's loss. Obtaining comparable loss with 
a \emph{single} prover would require the learner to access the ground truth at almost all of the points in the domain.

We also present lower bounds that demonstrate the optimality of our protocols in a number of respects, including prover complexity, number of samples, and need for query access. 
\end{abstract}

    \newpage
    {
    \hypersetup{linkcolor=black}
    \tableofcontents
    }
    \newpage

\ifnum\isjmlr=1
    \clearpage
    \pagenumbering{arabic}
    \setcounter{page}{1}
\fi

\section{Introduction}

Modern machine learning tasks require increasingly large amounts of data and computational power. As a result, model training has shifted from a task that almost anyone can perform on their own  to a task that requires the assistance of external agents that are resource-abundant and have better access to the underlying data. Furthermore, one is often presented with models that purport to approximate some ground truth, but are not accompanied with a rigorous or trustworthy performance guarantee. Moreover, these models are often given as black-boxes---that is, properties of the model are difficult to assess given its description.

This state of affairs naturally raises  the need to verify claims of performance of such  models, with significantly fewer resources than those needed to train comparable models and without fully trusting the parties making the claims.
Verifying such claims appears hard: There is a rich literature on efficient verification of {\em computations} performed by powerful-but-untrusted parties (e.g., \cite{Kilian92,Micali94,GoldwasserKR15}; see \cite{WalfishB15} for a survey); however, such mechanisms appear ill-suited to the task of assessing a model's accuracy, even if the code of  the model is explicitly given.
There is also  a growing body of work on using powerful-but-untrusted intermediaries to learn properties of unknown  ground-truth functions  \cite{GoldwasserRSY21,CanettiK21}, as well as efficient verification of  claims made by powerful provers regarding properties of huge combinatorial objects in general \cite{ErgunKR04,RothblumVW13}.   However, these works mostly focus on verifiable execution of a specific learning  (or property testing) algorithm,  rather than  evaluating the performance of  a given model without knowledge of the process used to  train it.  Perhaps closest to our task  are the works of
\cite{HermanR22,HermanR24-1,HermanR24-2} that allow verifying claims about properties of {\em distributions} that are accessible only via obtaining samples. However, this is still a far cry from assessing performance of a model. 

Some natural properties of ML models can of course be assessed using standard methods. For instance, the loss of a given model w.r.t.\ some sample distribution and loss function can be approximated by computing its empirical loss on a large enough sample. However, this method can be prohibitively costly both  in samples from the ground  truth and in queries to the model.
\cite{GoldwasserRSY21} show a technique for pushing
much of the burden to an external powerful-but-untrusted prover, but even  this method incurs high cost: to obtain an {\em additive} bound  of  $\eta$ on the model's error,  the learner needs to both communicate $\eta^{-2}$ unlabeled samples to the prover and have the prover query and report the ground truth values at these points, and obtain $\eta^{-1}$ labeled samples which are hidden from the prover in order to verify the prover's responses. To see that this is prohibitive, consider a setting where evaluating the ground truth requires performing an expensive physical experiment (e.g., as is required for validating the performance of AlphaFold \cite{Jumper2021}); ideally, assessing the accuracy of a model should only require performing a small constant number of experiments.

We would like to do better---in terms of the error, in terms of the access to the ground truth, and in terms of the communication with the prover. However, as evidenced by lower bounds proven in \cite{GoldwasserRSY21}, this appears hard---at least within the present framing of the problem.

\subsection{This work}

We show that the quality of black-box models can be assessed with significantly better accuracy and with significantly lower cost if the learner can interact with \emph{two powerful and competing provers.}
(The ``competing'' aspect  of the provers can be captured as a zero-sum game where only one of the two provers wins; alternatively we can assume that one of the provers is honest.)
The provers' power can be manifested either in terms of their  computational power, or in access to the ground truth, or in knowledge of the models, or any combination of these.   This  model is a natural  extension of the  \emph{refereed delegation of computation}  model of \cite{FeigeST88,FeigeK97,CanettiRR11,CanettiRR13,KolR14} to our setting.  Following the cue of these works,  we coin the term \textsf{refereed learning} to denote the type of learning performed in this model. (See Section \ref{sec:unrelated} for more discussion on and justification of this model.)

We first  define refereed learning for a general setting. Next  we focus on applying refereed learning to the following specific task, where we showcase the power of  the refereed learning framework
via concrete protocols. 
The learner--verifier\footnote{We use the terms {\em learner} and {\em verifier}  interchangeably. Indeed, the learner now  doubles as a verifier.} is presented with {\em two} candidate models that purport to compute the same ground truth, and is tasked with choosing the model that incurs  the smaller overall loss with respect to some sample distribution and loss function. (The restriction to two candidates is not essential, but it helps make the model more concrete. It also facilitates envisioning each one of the provers as ``trying to promote'' a different one of  the two models.)

The salient parameters we consider are (a) the overall loss of the output model relative to the better of the two competing models; (b) the number of learner queries to the ground truth and samples from it (both labeled and unlabeled ones); (c) the number of learner queries to the candidate models; (d) the computational complexity of the learner; (e)  the computational complexity and  query and sample complexity of the provers.
We first sketch and briefly discuss definitions of refereed learning, then present our results,  and finally discuss   the new tools we develop  to obtain these results.

\paragraph{A motivating example.}
We focus on
a setting where there is a known distribution over unlabeled samples, and where the ground truth is computable, but doing so is costly (e.g., it requires a physical experiment). The task is to compare the accuracy of two ML  models (such as AlphaFold \cite{Jumper2021}) where each model takes proteins as input and returns a prediction of how these proteins interact. The researcher (verifier) can specify a distribution over proteins of interest; however, assessing the quality  of these predictions incurs high cost, even for a given distribution. Specifically, it often requires synthesis and purification of the actual proteins, followed by cryo-EM imaging to confirm the predicted protein interaction. 

\subsubsection{Defining  refereed learning}
\label{sec:our-defs}

We start with  general refereed learning. Consider a learner-verifier $\cV$ and two provers $\cP_0,\cP_1$ that are presented with a ground truth function $f:\zo^d\rightarrow \cY$, a distribution $\cD$ over $\zo^d$, and $k$ models (a.k.a.\ hypotheses) $h_1,\ldots, h_k\in\cH$, where $\cH$ is some family of hypotheses.
We will typically assume that $h_1,\ldots, h_k$ and $f$ are accessed via queries and, depending on the setting, that $\cD$ is accessed either via samples or via queries to its probability mass function, $Q_\cD$, which maps $x\mapsto\Pr_{X\sim\cD}\brackets{X=x}$. 
To measure the learner's performance we use a score function $\cS$ (which assigns a score to each potential output of the learner, with respect to some sample distribution $\cD$, hypotheses $h_1,\ldots, h_k$ and function $f$) and a target function $\cT$.
In the definition below, $\alpha$ and $\eta$ respectively control the multiplicative and additive slack up to which the loss of the returned hypothesis matches that of the best hypothesis. We give protocols with purely multiplicative, purely additive, and mixed error guarantees.

 \begin{definition}[\textbf{Refereed learning, general case (informal)}]
A protocol $[\cP_0,\cP_1,\cV]$ is an \textsf{$(\alpha,\eta,\beta)$-refereed learning }protocol,  with respect to a family $\cH$ of hypotheses, a score function $\cS$  and  target function $\cT$, if for all $b\in\zo$, $h_1,\ldots, h_k\in\cH$,  and $\cP^*_{1-b}$, the learner output $\vb$ satisfies 
$$\Pr[\cS(\vb,\cD,f,h_1,\ldots,h_k) \leq \alpha \cT(\cD,f,h_1,\ldots,h_k)+\eta]\geq 1-\beta.
$$
 \end{definition}

For this general definition, we choose to \emph{minimize} the score $\cS$. This choice is motivated by the learning theory formulation of this problem, where the goal is to select the model (hypothesis) that minimizes a \emph{loss} function quantifying the deviation of the hypothesis from some ground truth.
Concretely, we focus on the setting where the learner selects between two models (hypotheses) $h_0$ and $h_1$, its output bit $\vb$ indicates the selected hypothesis, 
the general scoring function $\cS$ measures the loss of the chosen model with respect to some metric $\ell$ on the domain  $\cY$, and the target function $\cT$ is the smaller of the losses of $h_0$ and $h_1$.
(Additionally, we are primarily interested in the following loss: for some hypothesis $h$, sample  distribution $\cD$, and ground truth $f$, let $\cL_{\cD}\paren{f,h \mid \ell} = \Ex_{x\sim \cD}\brackets{\ell\bparen{f(x), h(x)}}$.)
In this paper, we focus on refereed learning protocols satisfying the following definition.

\begin{definition}[\textbf{Refereed learning, loss minimization (informal)}]
A protocol $[\cP_0,\cP_1,\cV]$ is an \textsf{$(\alpha,\eta,\beta)$-refereed learning protocol for loss minimization,}  with respect to a family $\cH$ of hypotheses and metric $\ell$, if for all $b\in\zo$, $h_0, h_1\in\cH$, and  $\cP^*_{1-b}$,
the learner output $\vb$ satisfies
$$\Pr\brackets{\cL_\cD(f,h_{\vb}\mid \ell) \leq \alpha \min_{s\in\zo}\cL_\cD(f,h_{s}
\mid \ell)+\eta}\geq 1-\beta.
$$
\end{definition}

\paragraph{Bounding the expected score and loss.} An alternative formulation would instead bound the {\em expected}  score  (respectively, loss).  That is,  the requirement  would be that 
$\Ex[\cS(\vb,\cD,f,h_1,\ldots, h_k))]\leq \alpha \cT(\cD,f,h_1,\ldots, h_k)+\eta$ (in the general case), or  
$\Ex[\cL_\cD(f,h_\vb\mid \ell)]\leq \alpha\min_{s\in\zo} \cL_\cD(f,h_s \mid \ell)+\eta$ (in the case of minimizing the loss).
While the two formulations are incomparable in general,  our protocols satisfy both with similar parameters. 

\paragraph{On strategic provers.} The above definition posits that at least one of the provers follows the protocol. We note, however,  that  refereed learning protocols often seem  to preserve their guarantees even when both provers are strategic with opposing goals.  Some supporting evidence for the implication is  the use of protocols developed in the refereed delegation of computation model in real-world applications where truth-telling is economically incentivized. Similar phenomena are manifested in the context of  debate systems. See more details in \Cref{sec:unrelated}.

\subsubsection{Our results}
\label{sec:results}
 
\paragraph{Protocols.} We give protocols for the multiplicative, additive, and mixed error settings. In the additive error setting,
we show how to use the two provers to obtain  additive error similar to that obtained by the \cite{GoldwasserRSY21} protocol mentioned above, while  significantly reducing the learner's interaction with both the ground truth and the models: this interaction now consists of   only a {\em single query.}

We then show, via simple extension, that even the provers can use  significantly fewer queries  
at the cost of obtaining an error bound that is both additive and multiplicative. %
Specifically, 
to obtain additive loss at most $\eta$ and multiplicative loss at most $1+\eps$, our learner only makes a single query to either the ground truth or one of the models, draws $\paren{1+\frac 1{\eps^2}}\cdot \frac 1\eta$ \emph{unlabeled} sample points from the underlying distribution, and has the provers query each model on all of the unlabeled sample points, and query the ground truth on $1+\frac1{\eps^2}$ of them.

We focus primarily on designing protocols for the low-loss setting, which turns out to be significantly more challenging. Here we would like to guarantee that the learner makes the right choice even when the models' losses are close to each other, up to a  {\em  multiplicative} factor of $1+\eps$ for an arbitrarily small $\eps>0$. Indeed, in such a setting, the number of samples needed to even observe the difference can be close to the entire sample space. Still, determining which of the two  competing models incurs smaller loss can have significant ramifications in applications that require high precision (e.g., using ML models for medical predictions based on imaging or other multi-dimensional  measurements,  or for financial applications where even tiny error margins become significant over time).\footnote{
The purely multiplicative error setting can also be a better fit when working with competing provers.
To see this, consider the following toy example. Suppose the verifier can only approximate provers' additive errors up to some threshold $\eta$ (and is unable to distinguish provers' losses that differ additively by at most $\eta$). Here each prover is only incentivized to learn up to loss $\eta$ if possible, but no better, since its hypothesis will be accepted with probability (at least) $\frac 12$ once it learns to loss at most $\eta$. Now suppose the verifier can approximate the ratio of the provers' error up to $(1\pm \eps)$. Here, a prover who can learn to loss 0 will do so (assuming sufficient incentives---e.g., no cost for additional computation).
}

We first concentrate on the zero-one metric\footnote{Define the \emph{zero-one metric} $\lzo$ by $\lzo(y,y')=1$ if $y\neq y'$ and $\lzo(y,y)=0$.  Let the \emph{zero-one loss} between functions $f$ and $h$ w.r.t.\ distribution $\cD$ be $\Ex_{x\sim\cD}\brackets{\lzo(f(x),h(x))}=\Pr\brackets{f(x)\neq h(x)}$.}
on $\cY$ and the uniform underlying distribution. In this setting, we design a protocol where the learner is guaranteed (except with some arbitrarily small constant probability) to select a model whose loss is at most a multiplicative factor of $(1+\eps)$ worse than the better model's. Moreover, the learner in this protocol is efficient: it makes a single query %
to the ground truth function $f:\zo^d\to\cY$ and has $(1+\frac1{\epsilon^2}) \poly(d)$  communication with the provers.

We then extend these results to handle arbitrary metric loss functions and arbitrary sample distributions.
Our results also take into account the cost of obtaining the desired level of numerical precision. The protocol for this more general setting is guaranteed to select a model whose loss is at most a multiplicative factor of $3+\eps$ worse than the best model. Moreover, the learner makes  a single query to either $f$, the two models, or the distribution,  has the same runtime as before, and only incurs a small cost of $\lambda\cdot\poly d$ in communication with the provers,
where $\lambda$ roughly bounds the bit precision of the numerical representations used for sending values between parties in the protocol.
We also show how to handle arbitrary precision at the cost of incurring a tiny additive error, with essentially no overhead in communication and runtime. 

The prover complexity in the protocols
with purely multiplicative error
may well depend on the hypothesis class, on the competing models,  and on the prover's knowledge of both. In  the extreme case where the prover has no apriori knowledge of the ground truth or the models, an exponential number of queries to the model is needed to follow our protocol. We demonstrate that this exponential overhead is inherent for a general solution. Despite this strong lower bound, we also demonstrate a setting  where the provers can use some apriori knowledge on the models to gain computational efficiency.

\paragraph{Lower bounds.}
We complement our protocols by establishing lower bounds that justify some of the complexity parameters of our protocols. 
We show that in any refereed learning protocol where the learner obtains additive error at most $\eta$, and either (a) accesses the ground truth only via labeled samples taken from the given distribution,  or else (b) has no knowledge of the underlying distribution of samples other than the samples obtained, the number of samples  that the learner obtains from the ground truth must be at least $\frac 1\eta$.
In other words, unless the learner both queries  the ground truth and also obtains additional information on the underlying distribution (say, the value of its PMF at certain points), $\eta\rightarrow 0$ is unattainable.

We also show that the prover's exponential runtime  in any general-purpose refereed learning protocol with purely multiplicative error is inherent. The argument for the case where the models can be accessed only as black boxes is straightforward and unconditional.  We then demonstrate the need for exponential computational power even for a general solution where the provers have a full whitebox view of the models. Specifically, we show that a refereed learning protocol with a purely multiplicative error guarantee 
can be used to solve computational problems (such as 3SAT) that are assumed to be exponentially hard. This means that, assuming the hardness of these problems, a refereed learning protocol cannot  in general be executed in polynomial time, even in the case where all parties  have full white-box access to the model.

\subsubsection{Our techniques}
\label{sec:techniques}

\paragraph{Protocols for purely additive and mixed errors.}
As a warm-up, consider
the setting
with additive error $\eta>0$ and the zero-one metric on $\cY$. (We describe these protocols in more detail in \Cref{sec:add-mixed-error}.) 
The natural way for the learner to bound the additive error by $\eta$ without using the provers is to draw $O(1/\eta^2)$ samples from $\cD$, query  $f,h_0,h_1$ at these points,  and pick the hypothesis with the smaller empirical loss.   When multiplicative loss $\alpha=1+\eps$ with $\eps>0$ is also allowed then the learner can choose to draw only $(1+\frac{1}{\eps^2})\cdot\frac1\eta$ samples from $\cD$, find the set $S$ of  samples on which $h_0$ and $h_1$ disagree, and then pick the hypothesis with the smaller empirical loss {\em over the samples in $S$.} 

The learner  can then use the provers as follows:  instead of  directly querying $f,h_0,h_1$  on  the sampled points, the learner can send the sampled points to the provers and have the provers make the queries and report back the obtained values.  If the provers disagree on any returned value, the learner picks  {\em one} such value, makes  the query itself, and proceeds with the values provided by the prover that reported the correct value.

\paragraph{Protocols for purely multiplicative error.}
Now consider
the more challenging setting where only  multiplicative error is allowed, i.e.,  $\eta=0$. Directly extending the above protocol from the mixed error case would be meaningless,  since the number of samples required exceeds the domain size when $\eta$ approaches 0.  An alternative approach would be to compute the empirical loss of the two hypotheses over a {\em sufficiently large sample} from  the ``disagreement set'' $S = \sets{x \mid h_0(x)\neq h_1(x)}$;  however, if  the set $S$  is sparse then the learner cannot efficiently sample  from it.  Furthermore, it is not immediately clear how the learner can use the provers to provide it with  correctly distributed samples from $S$.  In particular, the above method of settling  discrepancies between the provers regarding the value of either one of $h_0,h_1,f$ at a given point does not seem to be useful for  the purpose of obtaining a random sample from $S$.
To get around this issue  we devise a {\em certifiable sampling} protocol, described below, that allows the learner to obtain samples from $S$ that are guaranteed to be correctly distributed. Given this protocol, the learner simply picks the model with the smaller empirical loss over $O(1+\frac1{\eps^2})$ random samples from $S$. Finally,  the learner can offload all queries but one to the provers, as described earlier.
The certifiable sampling protocol makes heavy use of the \emph{certifiable sum} protocol.

\paragraph{The certifiable sample protocol.}
This protocol allows the verifier to efficiently obtain $m$ samples from an arbitrary distribution $\cD$ over $\zo^d$---even when the support of $\cD$ is \emph{both exponentially large and exponentially sparse}. At a high level, the protocol does this through inverse CDF sampling facilitated by certifiable sum (described next). That is, the verifier specifies some total ordering on $\cD$ and then samples a uniformly random value $p\in [0,1]$. The provers then return the element $x'\in\cD$ such that $\sum_{x\prec x'}\cD(x) < p\leq \sum_{x\preceq x'}\cD(x)$. The verifier then uses certifiable sum to confirm that these inequalities hold for the returned value $x'$.
See \Cref{lem:cert_sample} for a formal statement of the certifiable sample protocol, and \Cref{thm:rlp_zo} for a formal statement of the resulting refereed learning protocol.

\paragraph{The certifiable sum protocol.}
This protocol 
allows the learner, assisted by provers, to  evaluate quantities of the form $s=\sum_{x\in\zo^d}t(x)$, given only query access to the function $t$, in time that is polynomial in $d$.

The protocol has two symmetric stages (one for each prover); for clarity, we just describe one stage.
At a high level, each stage of the protocol starts by having  one prover  make a claim about the total sum $s$ along with a claim about the sums $s_0,s_1$ on two disjoint halves of the domain.
The learner then asks the other prover to identify a half of the domain on which the other prover is lying (if there is such a half).
This process continues recursively, for $d$ rounds, until the learner is left with a single point $x^*$ and the prover's claim that $t(x^*)=y$. The learner can then check this claim in a single query to $t$.
The key observation is that if a malicious prover misreports the value of the sum, then it must incorrectly report the value of the sum on at least one half of the domain. Thus, if the second prover is following the protocol, then the lying  prover is bound to be caught in one of the $d$ rounds.  Indeed,  if the prover ever misreports the sum on one half of the domain, that prover will necessarily lie about the sum of $t$ on (at least) one half of the remaining domain, and the final query of $t(x^*)$ will reveal that the cheating prover is lying; an honest prover's claims, by contrast, will always be found true. See \Cref{lem:cert_sum} for the full description.

\begin{theorem}[Refereeed learning for zero-one loss (\Cref{thm:rlp_zo}, informal)]
\label{thm:rlp_zo-inf}
    There exists a protocol $[\cP_0, \cP_1, \cV]$ that, for all $\eps,\beta> 0$ and distributions $\cD$ on $\zo^d$, is a $(1+\eps, 0, \beta)$-refereed learning protocol for $f$ and $\cD$ with respect to $\lzo$. The protocol has communication complexity and verifier runtime $\wt O\paren{(1+\frac{1}{\eps})^2\cdot \poly d \cdot \log\frac{1}{\beta}}$, and the verifier makes 1 query to $f$.
\end{theorem}

\paragraph{Extending to general metric loss functions.} 
Next we deal with the case of a general  (i.e., not zero-one) metric  $\ell$ on the range $\cY$.  This case is more challenging since different points in $\cY$ can have very different contributions to the overall loss. In particular,  there may be a single point $x^*$ with $\ell(h_1(x^*),f(x^*))\gg \ell(h_0(x^*),f(x^*))$, and thus a naive learner which, as in the zero-one case, samples from the set $S=\sets{x\mid \ell(h_0(x),h_1(x))>0}$ and outputs the hypothesis with better loss on the sample, will require many samples before obtaining the point $x^*$ and thus may select the worse model.  

To circumvent this issue, we define a rescaled version of $\cD$, denoted $\drescale{h_0,h_1}$, that places more mass on points $x$ where $\ell\bparen{h_0(x),h_1(x)}$ is large. By the triangle inequality, if $\ell\bparen{h_0(x), h_1(x)}$ is large, then either $\ell\bparen{h_0(x), f(x)}$ or $\ell\bparen{h_1(x), f(x)}$ must be large as well. Roughly, this technique resolves the above difficulty of finding a higher-loss point $x^*$ since if there is such a point,
then the rescaled distribution will place proportionally more mass on $x^*$. More generally, we show that under the rescaled distribution $\drescale{h_0,h_1}$, with high probability the worse hypothesis accounts for more than half of the combined empirical loss of $h_0$ and $h_1$ on $f$, computed over only $O\paren{1+\frac{1}{\eps^2}}$ samples. Leveraging the techniques we develop for the zero-one case, we show that the learner, with the help of the provers, can efficiently provide itself with query access to the probability mass function of $\drescale{h_0,h_1}$, and thus can execute the certifiable sample protocol  to efficiently generate samples from $\drescale{h_0,h_1}$. The full treatment appears in  \Cref{sec:rlp-gen}. Roughly speaking, the statement in \Cref{thm:rlp_zo-inf} holds for the general setting, except we obtain a $(3+\eps, 0, \beta)$-refereed learning protocol.

\begin{theorem}[Refereeed learning for general loss functions (\Cref{thm:rlp_gen}, informal)]
    There exists a protocol $[\cP_0, \cP_1, \cV]$ that, for all $\eps,\beta> 0$ and distributions $\cD$ on $\zo^d$, is a $(3+\eps, 0, \beta)$-refereed learning protocol for $f$ and $\cD$ with respect to a specified metric $\ell$. The protocol has communication complexity and verifier runtime $\wt O\paren{(1+\frac{1}{\eps})^2\cdot \poly d \cdot \log\frac{1}{\beta}}$, and the verifier makes 1 query to $f$.
\end{theorem}

\paragraph{Efficient refereed learning for juntas.} We complement the above general-purpose refereed learning protocols, where the provers need exponential time, by demonstrating a refereed learning protocol for a natural learning task where the provers can be efficient. Specifically, we consider the case where $h_0$ and $h_1$ are promised to be \emph{$j$-juntas} (i.e., Boolean functions that each depend only on some  set of $j$ input coordinates), and the  active index sets $J_0$ and $J_1$ of $h_0$ and $h_1$ are given as input to all parties. In this setting, we show that the provers can be implemented efficiently. The idea is the following: In the general case, the task that determines the prover runtime  is computing the set $S=\sets{x\mid h_0(x)\neq h_1(x)}$; when $h_0$ and $h_1$ are promised to be juntas with $j\approx \log d$ active indices, the provers can compute $S$ in time $\poly d$. Since the distribution is uniform, the certifiable sample protocol used in the general case can also be implemented efficiently, and thus the provers can be made to run in time $\poly d$. See \Cref{prop:juntas} for a formal statement and construction of the protocol.

\paragraph{Lower bounds.}
As described earlier, we show several  impossibility results, each demonstrating the optimality of a different aspect of the protocols.
The first result (\Cref{thm:lower_bound_samples}) demonstrate that  without query access to the ground truth $f$, a learner would in general need a prohibitive number of queries. 
The argument is straightforward: fix $\sets{h_0,h_1}$, sample $b\sim\zo$ and let $f\gets h_b$. Consider the cheating prover that executes the honest protocol, except it ``pretends'' that $f=h_{1-b}$. As long as the learner does not obtain any sample $x$ with $f(x)\neq h_{1-b}(x)$, it cannot refute the malicious prover's claim that $h_{1-b}$ has zero loss, and thus cannot determine if it should accept $h_0$ or $h_1$. The proof that query access to $Q_\cD$, the PMF of $\cD$, is necessary (\Cref{thm:dist_knowledge}) follows a similar outline. 

Turning to  the complexity of the provers,  we first observe that, when the provers only have black-box access to $h_0$ and $h_1$, the provers may need to query  $h_0$ and $h_1$ at $\Omega(2^d)$ points to find the hypothesis with better loss, up to a multiplicative factor. (To see this, for all $z\in\zo^d$ let $h_z=\Ind\brackets{x=z}$, and sample hypotheses $z,z'\sim\zo^d$ and ground truth  $f\sim\sets{h_z,h_{z'}}$. Any algorithm $\cA$ which gets query access to $h_z,h_{z'}$, and $f$ cannot distinguish whether $f=h_z$ or $f=h_{z'}$ until it queries either $z$ or $z'$. Since these are uniformly random points, $\cA$ must make $\Omega(2^d)$ queries.) In \Cref{thm:rlp_runtime_lb} we then extend this bound to the case where the provers are given an explicit description of $h_0$ and $h_1$. This bound 
proceeds by reduction from 3-SAT:  Any refereed learning protocol which guarantees  any purely multiplicative  bound on the loss of the computed hypothesis  can be used to distinguish satisfiable formulas from unsatisfiable ones.

\subsection{Related work}
\label{sec:unrelated}

This work  combines ideas,  formalisms, and techniques from a number of different areas. Here we briefly review some of the main works that inspired the present one, as well as works that may appear related but differ in some substantial ways. 

The idea of considering the computational power of a model that involves a weak verifier and two or more provers, one of which is assumed to be honest, goes back to the works of Feige et al.~\cite{FeigeST88,FeigeK97}, who also observe that the  provers can be viewed as {\em competing.}\footnote{
Indeed, with sufficiently large rewards to provers whose proposals are adopted and who demonstrate the other prover's inconsistencies, as well as harsh penalties to provers who demonstrate inconsistency of the other prover, the provers are incentivized to (a) always provide the best possible model and (b) never provide a demonstrable inconsistency---even without assuming honesty of either prover. 
} 
Later, \cite{CanettiRR11,CanettiRR13} and \cite{KolR14} have demonstrated several  protocols for delegating arbitrary computations  to untrusted servers in this model.  These {\em refereed delegation} protocols are both  simpler and significantly more efficient than ones designed for a single untrusted prover.  In fact, one of the protocols in \cite{CanettiRR13} is currently in commercial use  within an application where provers are  competing strategic agents, and the protocol is used to incentivize the agents to be truthful \cite{ArunATQKBEFB25}.
Works in the two-server model, such as Prio of \cite{Corrigan-GibbsB17}, also often assume honesty from at least one server, though that line of work is closer to threshold MPC than to the refereed computation literature.

We note, however, that  known refereed delegation protocols do not appear to be directly applicable to our setting.  In particular, these protocols are geared towards verifying fully specified deterministic computations with inputs that are readable by the verifier in full. In contrast,  in our setting  the learner is presented with a black box model whose code is unknown and whose sample space is potentially huge. 

The works of
\cite{ErgunKR04,RothblumVW13} consider a weak verifier that uses the power of a {\em single} untrusted prover to decide  whether some huge mathematical object, which is accessible to the verifier only via queries,  has some claimed property, or else is far from having the property.
\cite{HermanR22,HermanR24-1,HermanR24-2} consider the case where the object in question is a distribution, and the verifier's  access to the distribution is via obtaining samples.
\cite{GoldwasserRSY21} consider a (single) prover that wishes to convince a suspicious verifier that a given concept has some desirable properties.  %
 In all, to the best of our knowledge, this is the first work that considers the power of the two-prover model in the context of learning, testing, or verifying the properties of black-box objects,  models being a special case. 

 A related and very vibrant area of research is that of {\em debate systems} where a panel of competing AI agents debate in order to help a weak referee (either a human or another AI agent) obtain a meaningful decision, often with respect to AI safety and alignment. See, e.g.,  \cite{IrvingCA18,GuoCWCPCWZ24}.  However, both the methods and the specific goals in these works are very different from the ones here.

\subsection{Organization}

\Cref{sec:refereed_learning} introduces a framework for refereed learning and provides a formal definition of refereed learning protocols. \Cref{sec:tools}  develops several key tools which are used to construct refereed learning protocols. \Cref{sec:rlp} leverages these tools to construct refereed learning protocols for the zero-one loss (\Cref{thm:rlp_zo}) and for general metric loss functions (\Cref{thm:rlp_gen}).  \Cref{sec:lower_bounds} proves several lower bounds which justify the learner's access model and the runtime of the provers. We conclude with \Cref{sec:app_ext,sec:add-mixed-error}. \Cref{sec:app_ext}  extends the earlier protocols to the setting where the distribution and loss are measured to arbitrary precision, as well as an application to junta functions where the provers can be implemented efficiently. \Cref{sec:add-mixed-error} presents some simple protocols for the additive and additive/multiplicative error settings.

\section{Framework for refereed learning}
\label{sec:refereed_learning}

In this section we formally define a \emph{refereed learning protocol}. First, we provide a definition in terms of a ``score'' and ``target'' function, and second, we provide a definition for the special case where the score and target correspond to loss minimization.

Throughout this paper we use the following standard notation: for a protocol $\brackets{\cP_0,\cP_1,\cV}$, let $\brackets{\cP_0(A),\cP_1(B),\cV(C)}(D)$ be a random variable denoting the output of $\brackets{\cP_0,\cP_1,\cV}$ when prover $\cP_0$ has input $A$, prover $\cP_1$ has input $B$, learner-verifier $\cV$ has input $C$, and all have input $D$. We define the \emph{communication complexity} of a protocol $\brackets{\cP_0,\cP_1,\cV}$ as the number of bits sent between $\cP_0$, $\cP_1$, and $\cV$. Additionally, for an algorithm $\cA$ and function $f$, let $\cA^f$ denote $\cA$ with query access to $f$---that is, $\cA$ can specify a query $x$ and receive response $f(x)$. The \emph{query complexity} of $\cA^f$ is the number of queries made by $\cA$ to $f$. For a distribution $\cD$, let $\cA^{\cD}$ denote $\cA$ with access to samples drawn from $\cD$. The \emph{sample complexity} of $\cA^{\cD}$ is the number of samples $\cA$ draws from $\cD$. We say a prover $\cP$ is \emph{honest} if it runs the algorithm that is specified by the protocol; a \emph{malicious} prover (denoted $\cP^*$) may deviate from the algorithm specified by the protocol.

In order to define refereed learning, we first define a score and a target. A \emph{score} $\cS$ (parameterized by $k,d\in\N$ and a set $\cY$) is a function that sends tuples $(\vb,f,h_1,\dots,h_k,\cD)\mapsto \R$, where $\vb$ is an output of the learner, $f,h_1,\dots,h_k$ are functions $\zo^d\to\cY$ for some fixed range $\cY$, and $\cD$ is a distribution over $\zo^d$. Additionally, a \emph{target} $\cT$  (parametrized by $k,d\in\N$ and a set $\cY$) is a function that sends tuples $(f,h_1,\dots,h_k,\cD)\mapsto \R$. 

In order to encode various ways in which the parties can access the ground truth, the sample distribution, and the hypotheses,  we formalize an {\em access model} as three oracles, one for each prover and one for the verifier. An oracle allows some sample requests and queries (namely,  it returns either a sample or the value of the relevant function applied to the query,  as appropriate) while disallowing others. 
We let  $\cA^{\cO(f,h_1,\ldots, h_k,\cD)}$ denote algorithm $\cA$ with access to $f,h_1,\dots,h_k$ and $\cD$,  controlled by oracle $\cO$.

\begin{definition}[Refereed learning protocol---general case]
\label{def:rlp_abstract}
 Fix score $\cS$ and target $\cT$ with respect to parameters $k,d\in\N$ and range $\cY$.  Let $\cH\subset\sets{h:\zo^d\to\cY}$ and $\mathds D\subset \sets{\cD\mid \supp(\cD)\subset\zo^d}$.
 Fix oracle access models $\acc_0,\acc_1$ and $\acc_\cV$, slack parameters $\alpha\geq 1$ and $\eta\geq 0$, and soundness error $\beta\geq 0$.
 A protocol $\brackets{\cP_0,\cP_1,\cV}$ is a \emph{$(k,\alpha,\eta,\beta)$-refereed learning protocol (RLP) for $\cH$ and $\mathds D$ with respect to $\cS,\cT$, and oracles $\acc_0,\acc_1$ and $\acc_\cV$}, if for all distributions $\cD\in\mathds D$, (possibly randomized) functions $f:\zo^d\to\cY$ and $h_1,\dots,h_k\in\cH$, the following holds: 
     \begin{itemize}
        \item For all $b\in\zo$ and $\cP^*_{1-b}$, the output $\vb\gets \brackets{\cP^{\acc_b(f,h_1,\ldots,h_k,\cD)}_b, \cP^*_{1-b}, \cV^{\acc_\cV(f,h_1,\dots,h_k,\cD)}}$ satisfies         
    \[
        \Pr\Bigl[\cS(\vb, f, h_1,\dots,h_k,\cD) \leq \alpha\cdot \cT(f,h_1,\dots,h_k,\cD) + \eta\Bigr] \geq 1-\beta,
    \]
    where the randomness is over the coins of the verifier, the honest prover, and the oracles. 
    \end{itemize}
\end{definition}

The definition above is quite general and can be applied to settings beyond learning. In this work, we focus on using the refereed setting for learning and adopt the following, more concrete definition.
As compared to \Cref{def:rlp_abstract}, in \Cref{def:rlp_concrete} we replace the score and target with a metric loss function $\cL$ and only consider the case of $k=2$. 

\begin{definition}[Metric loss function, zero-one metric]
    \label{def:loss_function}
    Fix a range $\cY$ with metric\footnote{A metric $\ell$ satisfies non-negativity (i.e., $\ell(y,y')>0$ if and only if $y\neq y'$), symmetry, and the triangle inequality.} $\ell:\cY\times\cY\to\R$. Additionally, for all $d\in\N$ and all functions $f,h:\zo^d\to\cY$ and distributions $\cD$ over $\zo^d$, define the \emph{metric loss between $f$ and $h$ with respect to $\ell$ and $\cD$} as $\cL_\cD\paren{f,h\mid \ell} = \Ex_{x\sim \cD}\brackets{\ell\bparen{f(x), h(x)}}$. We will omit the dependence on $\ell$ when it is clear from context. Additionally, we define the \emph{zero-one} metric $\lzo$ by $\lzo(y,y')=1$ if $y\neq y'$ and $\lzo(y,y)=0$.
\end{definition}

\begin{definition}[Refereed learning protocol---loss minimization]
\label{def:rlp_concrete}
Fix range $\cY$ with metric $\ell$, dimension $d\in\N$, and $\cH\subset\sets{h:\zo^d\to\cY}$ and $\mathds D\subset \sets{\cD\mid \supp(\cD)\subset\zo^d}$. Fix oracle access models $\acc_0,\acc_1$ and $\acc_\cV$, slack parameters $\alpha\geq 1$ and $\eta\geq 0$, and soundness error $\beta\geq 0$.
A protocol $\brackets{\cP_0,\cP_1,\cV}$ is a \emph{$(\alpha,\eta,\beta)$-refereed learning protocol (RLP) for $\cH$ and $\mathds D$ with respect to $\ell$ and oracles $\acc_0,\acc_1$ and $\acc_\cV$}, if for all distributions $\cD\in\mathds D$, functions $f:\zo^d\to\cY$ and $h_0,h_1\in\cH$, the following holds: 
     \begin{itemize}
        \item For all $b\in\zo$ and $\cP^*_{1-b}$, the bit $\vb\gets \brackets{\cP^{\acc_b(f,h_0,h_1,\cD)}_b, \cP^*_{1-b}, \cV^{\acc_\cV(f,h_0,h_1,\cD)}}$ satisfies         
    \[
        \Pr\Bigl[\cL_{\cD}(h_\vb,f \mid \ell) \leq \alpha\cdot \min_{s\in\zo}\cL_{\cD}(h_s, f \mid \ell) + \eta\Bigr] \geq 1-\beta,
    \]
    where the randomness is over the coins of the verifier, the honest prover, and the oracles. 
    \end{itemize}
\end{definition}

\Cref{def:rlp_concrete} allows the distribution, metric, and functions to take on arbitrary real values. To handle issues with describing and sending arbitrary real-values, we focus on the setting where the distribution $\cD$ and the metric $\ell$ can be represented exactly with the following discretization.
Specifically, where we let $\lamprec := (-2^{\lambda},2^{\lambda})\cap 2^{-\lambda}\cdot \Z$ denote the set of multiples of $2^{-\lambda}$ bounded in magnitude by $2^{\lambda}$, we say a function $f$ is \emph{$\lambda$-precise} if its range is $\lamprec$.
We say a distribution $\cD$ is $\lambda$-precise if its PMF $Q_\cD$ is $\lambda$-precise. Additionally, let $\mathds D_{d,\lambda}$ denote the set of $\lambda$-precise distributions over $\zo^d$ (when $d$ is clear from context we write $\mathds D_\lambda$ instead of $\mathds D_{d,\lambda}$).

Additionally, in order to succinctly refer to the set of all functions $\zo^d\to\cY$ and all distributions over $\cD$, we define the following families:

\begin{definition}[Families $\fF$ and $\fD$]
    \label{def:class_all} For all $d\in\N$ and sets $\cY$, let $\fF_{d,\cY}=\sets{f:\zo^d\to\cY}$ and $\fD_d=\sets{\cD \mid \supp(\cD)\subset\zo^d}$. When $d$ and $\cY$ are clear from context we write $\fF$ and $\fD$. 
\end{definition}

\section{Tools for refereed learning}
\label{sec:tools}

In this section we develop several key tools for designing refereed learning protocols.

\subsection{Certifiable sample and certifiable sum}
\label{sec:cert_sample-sum}

Our first tool, certifiable sample (see \Cref{lem:cert_sample}), allows the verifier to efficiently sample from a distribution that is close to $\cD$ given query access to its \emph{probability mass function} $Q_\cD$ (defined by $Q_{\cD}(x) = \Pr_{X\sim \cD}\brackets{X=x}$).
For all distributions $P$ and $Q$ over domain $\cX$, the \emph{total-variation distance} (TV) distance between $P$ and $Q$ is $\dtv(P,Q) = \sup_{A\subset\cX} \abs{P(A) - Q(A)}$.
For a set $S$, a total ordering $\prec$, and an algorithm $\cA$, let $\cA^{S,\prec}$ denote $\cA$ with
membership query access to $S$ and query access to $\Ind\brackets{\cdot \prec\cdot }$ (the function that takes as input $(x,x')$ and returns $1$ if $x\prec x'$ and $0$ otherwise).

\begin{lemma}[Certifiable sample]
\label{lem:cert_sample}
    \Cref{prot:NEW-cert_sample} has the property that
    for all $d\in\N$, distributions $\cD$ over $\zo^d$ with probability mass function $Q_\cD$,
    distance $\delta\in (0,1)$,
    and sample size $m\in\N$, there exists a distribution $\wh \cD$ with $\dtv\bparen{\wh\cD, \cD}\leq \delta$ such that:
    \begin{enumerate}
        \item\label{lem:cert_sample-1} For all $b\in\zo$ and $\cP^*_{1-b}$, the output $\brackets{\cP^{Q_\cD,\prec}_b, \cP^*_{1-b},\cV^{Q_\cD,\prec}}(d,\delta,m)$ consists of $m$ samples $x_1,\dots, x_m\sim \wh \cD$.
        \item\label{lem:cert_sample-2} The verifier's runtime and protocol's communication complexity are $(m \log\frac{1}{\delta})\cdot\poly d$.
    \end{enumerate}
\end{lemma}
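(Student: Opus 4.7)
The plan is to construct the approximating distribution $\wh\cD$ via a Birgé-style piecewise-constant approximation in the \emph{probability} domain, and to use the Certifiable Sum and Certifiable Index subprotocols developed earlier in this section to let the verifier learn and certify the structure of $\wh\cD$ with the help of the provers.

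Concretely, first partition $\zo^d$ into buckets by $Q_\cD$-value. Take a geometric threshold sequence $\tau_k = (1+\delta')^{-k}$ with $\delta' = \Theta(\delta)$ and $k = 0, \dots, K$, where $K = O\bparen{(d + \log(1/\delta))/\delta}$ is chosen so that $\tau_K \leq \delta / 2^{d+1}$; this guarantees that the total $\cD$-mass placed on points $x$ with $Q_\cD(x) < \tau_K$ is at most $\delta/2$. Define bucket $B_k = \set{x \in \zo^d : Q_\cD(x) \in [\tau_k, \tau_{k-1})}$ and let $\wh\cD$ be the distribution obtained by redistributing each bucket's $\cD$-mass uniformly over its members (and discarding the tail $\set{x : Q_\cD(x) < \tau_K}$, then renormalizing). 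Because the ratio of the largest to smallest probability inside each bucket is at most $1+\delta'$, flattening contributes $O(\delta')$ to the TV distance, which together with the $\delta/2$ tail yields $\dtv(\wh\cD, \cD) \leq \delta$ for a suitable choice of $\delta' = \Theta(\delta)$.

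The protocol then proceeds as follows. Each prover sends its claim for bucket sizes $\hat n_0, \dots, \hat n_K$ with $\hat n_k = |B_k|$. For each $k$, the verifier invokes Certifiable Sum on the function $t_k(x) = \Ind\brackets{Q_\cD(x) \in [\tau_k, \tau_{k-1})}$, which she can evaluate with a single query to $Q_\cD$; by the guarantee of Certifiable Sum she obtains the true $n_k$ as long as one prover is honest. Having certified every $n_k$, the verifier samples from $\wh\cD$ as follows: (i) pick a bucket index $k$ with probability $\propto n_k \cdot \tau_k$ (this categorical distribution is computable from certified quantities); (ii) sample a uniform index $i \in \set{1, \dots, n_k}$; and (iii) invoke Certifiable Index on $B_k$ under lexicographic order (whose membership test is one query to $Q_\cD$) to retrieve the $\ord{i}$ element. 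For $m$ samples the verifier repeats (ii)--(iii) $m$ times. Crucially, the distribution from which the samples are drawn depends only on the certified values $n_k$ and on the verifier's own coins, so the $m$ outputs are i.i.d.\ from the same distribution $\wh\cD$ regardless of which prover is honest, as the lemma requires.

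For complexity, the $K+1$ Certifiable Sum calls cost $\poly(d)$ each and contribute $K \cdot \poly(d) = \frac{1}{\delta}\log\frac{1}{\delta} \cdot \poly(d)$ (absorbing the $d$ from $d + \log(1/\delta)$ into $\poly(d)$), while each of the $m$ samples incurs one Certifiable Index call of cost $\poly(d)$; this yields total verifier runtime and protocol communication $\bparen{m + \frac{1}{\delta}\log\frac{1}{\delta}} \cdot \poly(d)$. The main subtlety I anticipate is the TV analysis---choosing $\delta'$ so that the within-bucket flattening error plus the truncation error sums to at most $\delta$, and handling the fact that the thresholds $\tau_k$ may be irrational (using rational approximations with $\poly(d, \log(1/\delta))$ bits of precision should suffice without inflating the TV bound). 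Beyond that, everything reduces essentially immediately to the already-established guarantees of Certifiable Sum and Certifiable Index.
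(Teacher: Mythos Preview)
Your approach is essentially correct and takes a genuinely different route from the paper's. The paper first sorts the support of $\cD$ by probability mass, views the result as a monotone distribution on $[N]$, and applies Birg\'e's decomposition in the \emph{rank} domain (intervals $I_1,\dots,I_\ell$ of exponentially increasing cardinality); it then uses Certifiable Sum to recover each bucket's actual \emph{mass} $p_k=\cD_\lambda(S[I_k])$ and samples bucket $k$ with probability $p_k$. You instead bucket directly by \emph{value} of $Q_\cD$ using geometric thresholds, and certify only bucket \emph{sizes} $n_k$. Your version avoids the sort-by-probability step and lets Certifiable Index work with the plain lexicographic order inside each bucket, which is a nice simplification. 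The paper's version, on the other hand, gets the TV bound as a black box from Birg\'e's theorem, and its explicit $\lambda$-precision preprocessing (replacing $\cD$ by the rounded $\cD_\lambda$) handles the bit-complexity issue you flag at the end more cleanly than ad hoc rational threshold approximations.

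One concrete gap to fix: the $\wh\cD$ you analyze (``redistributing each bucket's $\cD$-mass uniformly'') is not the distribution your protocol actually samples from. Your protocol picks bucket $k$ with probability $\propto n_k\tau_k$, not $\propto \cD(B_k)$, so the sampled distribution assigns each $x\in B_k$ mass $\tau_k/Z$ with $Z=\sum_j n_j\tau_j$. This is still $O(\delta)$-close to $\cD$---since $n_k\tau_k\le \cD(B_k)\le (1+\delta')\,n_k\tau_k$, the same kind of calculation goes through---but as written the TV analysis and the protocol do not match. You should either (a) change the protocol to certify $\cD(B_k)$ via Certifiable Sum on $t_k(x)=Q_\cD(x)\cdot\Ind\brackets{x\in B_k}$ and sample buckets with those masses, or (b) rewrite the TV paragraph for the distribution you actually sample. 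Either fix is straightforward, but one of them is required.
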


In order to prove \Cref{lem:cert_sample} we leverage our second important tool, certifiable sum, which gives a protocol for the verifier to efficiently determine the answer to arbitrary functions of the form $\sum_{x\in\zo^d} t(x)$ given only query access to $t$.

\begin{lemma}[Certifiable sum]
    \label{lem:cert_sum}
    Fix $\lambda\in\N$.
    \Cref{prot:cert_sum} has the property that
    for all dimensions $d\in\N$ and functions $t:\zo^d\to \lamprec$ the following holds: 
    \begin{enumerate}
        \item 
        For all $b\in\zo$ and $\cP^*_{1-b}$ we have 
        \(
        \brackets{\cP^t_b, \cP^*_{1-b}, \cV^t}(d) = \sum_{x\in\zo^d} t(x).
        \)
        \item The runtime of the verifier and the communication complexity of the protocol are both $\lambda\cdot \poly d$. 
        \item The verifier makes $2$ queries to $t$.
    \end{enumerate}
\end{lemma}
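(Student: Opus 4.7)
The plan is to formalize the recursive ``trap'' protocol sketched in Section~\ref{sec:techniques}, built from two symmetric stages (Stage $0$ and Stage $1$), where in Stage $b$ the prover $\cP_b$ plays the role of \emph{claimer} and $\cP_{1-b}$ plays the role of \emph{judge}. The verifier runs both stages and outputs the value claimed in a stage whose final query check passes. Completeness and soundness will together guarantee that the honest prover's stage always passes with the true sum, and the malicious prover's stage either fails or is forced to output the truth.

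In Stage $b$, the verifier maintains a live subcube $C \subseteq \zo^d$ (initially all of $\zo^d$) and the claimer's current claim $s$ for $\sum_{x \in C} t(x)$. Initially $s$ is the claimer's declared total. In each of $d$ rounds the verifier splits $C$ into the two halves $C_0, C_1$ along the next free coordinate, obtains subsum claims $s_0, s_1$ from the claimer, and rejects the stage if $s_0 + s_1 \ne s$. Otherwise the verifier asks the judge to either declare both $s_0, s_1$ correct (in which case the stage accepts with top-level value $s$) or to name an index $i \in \zo$ to recurse into; in the latter case the verifier sets $(C, s) \gets (C_i, s_i)$ and continues. After $d$ rounds $C = \{x^*\}$, and the verifier queries $t(x^*)$ and accepts the stage iff $s = t(x^*)$.

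For correctness I would establish two invariants. Completeness: if $\cP_b$ is honest, then in Stage $b$ the claimer's reports are always accurate, so the invariant $s = \sum_{x \in C} t(x)$ is preserved through every recursion regardless of how the adversarial judge chooses $i$; hence $s = t(x^*)$ at the final round and the stage accepts with the true sum. Soundness: if $\cP_{1-b}$ is honest and $\cP^*_b$'s initial claim $s$ is wrong, then by induction the invariant $s \ne \sum_{x \in C} t(x)$ is preserved: either $s_0 + s_1 \ne s$ and the verifier rejects directly, or $s_0 + s_1 = s$ but then at least one $s_i$ disagrees with the true subsum, and the honest judge identifies such an $i$. After $d$ rounds the single-point claim is exposed by the oracle query. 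Combining, in any adversarial interaction at least one stage outputs the correct sum, and no accepting stage outputs an incorrect value, so the verifier's final output is correct.

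For complexity, each round of each stage exchanges a constant number of elements of $\Q_{\lambda}$ plus one judge bit; tracking rationals with common denominators keeps each arithmetic check at cost $\lambda \cdot \poly(d)$. With $d$ rounds and one oracle query per stage, the total communication and verifier runtime are $\lambda \cdot \poly(d)$, and the verifier makes exactly $2$ queries to $t$. I expect the main subtlety to be the asymmetry of the two stages: in the honest claimer's stage the adversarial judge can steer the recursion arbitrarily but cannot break the ``$s$ always correct'' invariant, while in the malicious claimer's stage the honest judge is \emph{always} able to find a lying half once $s$ itself is wrong, which is precisely what forces a contradiction at the $d$-th round.
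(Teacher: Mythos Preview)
Your proposal is correct and follows essentially the same bisection-and-trap strategy as the paper's proof: two symmetric stages where one prover claims subsums on successively halved subcubes, the other prover steers the recursion toward a lying half, and a single oracle query at a leaf settles each stage. The only cosmetic difference is that you allow the judge an ``early accept'' option whereas the paper always runs all $d$ rounds (sending bit $0$ when both halves are correct); your invariant argument handles this harmlessly, and the complexity and query bounds match.
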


We defer the proof of \Cref{lem:cert_sum} and complete the proof of \Cref{lem:cert_sample} below.

\begin{proof}[Proof of \Cref{lem:cert_sample}]
    Let $\prec$ denote $\prec_d$. At a high level, this algorithm uses the certifiable sum protocol to perform inverse CDF sampling from $\cD$. Roughly, the verifier selects a uniformly random value in $[0,1]$ and then asks for the value of $x$ such that $\sum_{x' \prec x} \cD(x) < p \leq \sum_{x' \preceq x}\cD(x)$. (In reality, to avoid issues related to numerical precision the verifier actually rounds each probability of the PMF to a multiple of $2^{-\lambda}$ and works with a CDF based on the sum of these rounded values.) The verifier uses calls to certifiable sum protocol (\Cref{lem:cert_sum}) to do this entire process efficiently.
    
    Since exactly representing probabilities may require arbitrary precision, the provers cannot hope to send the exact probabilities. In order to circumvent this issue and bound the communication cost of certifiable sample, we will instead obtain $m$ samples from 
    the distribution $\cD_{\lambda}$ defined by $\cD_{\lambda}(x) = \frac{\floor{\cD(x)}_{\lambda}}{\sum_{x\in\zo^d} \floor{\cD(x)}_{\lambda}}$ where $\lambda>d$ is an integer and $\floor{y}_\lambda$ denotes $2^{-\lambda}\cdot\floor{2^{\lambda}\cdot y}$ for all $y\in\R$---that is, $\floor{y}_\lambda$ denotes the nearest multiple of $2^{-\lambda}$ that is at most $y$.

    Define the protocol as follows:

\protocoltop{
$\brackets{\cP_0^{Q_\cD,\prec},\cP_{1}^{Q_\cD,\prec},\cV^{Q_\cD,\prec}}(d, \delta, m)$
}{certifiable sample}{prot:NEW-cert_sample}{
    \begin{enumerate}
        \item $\cV$: Set $\lambda\gets d + 1 + \log\frac{1}{\delta}$. Obtain $T_\lambda = \sum_{x\in\zo^d} \floor{\cD(x)}_\lambda$ via certifiable sum (\Cref{lem:cert_sum}) with $t(x) = \floor{\cD(x)}_\lambda$.
        \item $\cP_0,\cP_1$: Let $S\gets \supp(\cD)$.
        \item Repeat $m$ times:
        \begin{enumerate}
            \item $\cV$: Select $p\sim [0,T_\lambda] \cap 2^{-\lambda}\cdot \Z$ uniformly at random and send to $\cP_0,\cP_1$. \item For each $b\in\zo$:
            \begin{enumerate}
            \item $\cP_b$: return the largest\footnote{according to $\prec$} $\wh x_b$ s.t.\
            \(
                \sum_{\substack{x\in S \text{ s.t.}\\ x\prec \wh x_b }} \floor{\cD(x)}_\lambda  <  p.
            \)
            \item $\cV$: Obtain $p^\mathit{small}_b = \sum_{\substack{x\in S \text{ s.t.}\\ x\prec \wh x_b }} \floor{\cD(x)}_\lambda $ via certifiable sum (\Cref{lem:cert_sum}), using
            \(
                t(x) = \floor{\cD(x)}_\lambda\cdot \Ind\brackets{x\in S \wedge x\prec \wh x_b}.
            \)
            \item $\cV$: If $p^\mathit{small}_b <  p \leq p^\mathit{small}_b + \floor{\cD(\wh x_b)}_\lambda$ then accept $\wh x_b$; otherwise reject. \label{line:rej-cert-samp}
            \end{enumerate}
            \item $\cV$: Let $\wh x$ denote the first value in $(\wh x_0, \wh x_1)$ that was accepted. Output $\wh x$.
        \end{enumerate}
    \end{enumerate}}

    To prove the lemma, it suffices to show (1) that each loop of the protocol obtains an i.i.d.\ sample $\wh x \sim \cD_\lambda$ and (2) that $\dtv(\cD,\cD_\lambda)\leq \delta$.

    We first show item (1). Note that, for fixed $p$, there is a unique value $\wh x_b$, and this a unique value $p^{\mathit{small}}_b$, satisfying the conditions; let $x^*$ and $p^*$ denote these values. 
    Assume without loss of generality that $\cP_0$ is honest. The value $\wh x_0$ is accepted: by the correctness guarantees of certifiable sum (\Cref{lem:cert_sum}), since $\cP_0$ is honest then $\cV$ correctly obtains $\wh x = x^*$ and $p^\mathit{small}_0 = p^*$, so $\wh x_0$ will satisfy the condition checked by $\cV$ on Line~\ref{line:rej-cert-samp}.
    Next note that the value $\wh x_1$ is rejected if $\wh x_1 \neq x^*$, since in this case it fails to satisfy the condition
    \(
    p^\mathit{small}_1 < p \leq p^\mathit{small}_1 + \floor{\cD(\wh x_1)}_\lambda.
    \)
    Thus, by the correctness guarantees of certifiable sum (\Cref{lem:cert_sum}), and since $\cP_0$ is honest, the verifier obtains a value $\wh x = x^*$ satisfying the condition checked on Line~\ref{line:rej-cert-samp}.

    We now prove that if $\wh x$ satisfies the condition $p_\mathit{small} < p \leq p_\mathit{small} + \floor{\cD(\wh x)}_\lambda$, then $\wh x\sim \cD_\lambda$. This follows immediately from
    \begin{align*}
    \Pr[\wh x = x]
    &= \Pr_{p}\brackets{
        \sum_{x' \prec x} \floor{\cD(x')}_\lambda
        < p
        \le
        \sum_{x' \preceq x} \floor{\cD(x')}_\lambda
    } \\
    &= \frac{
        \sum_{x' \preceq x} \floor{\cD(x')}_\lambda
        -
        \sum_{x' \prec x} \floor{\cD(x')}_\lambda
    }{
        \sum_{x\in\zo^d} \floor{\cD(x)}_\lambda
    } \\
    &= \frac{\floor{\cD(x)}_\lambda}{\sum_{x\in\zo^d} \floor{\cD(x)}_\lambda}
    = \cD_\lambda(x).
    \end{align*}
    Because each of the $m$ iterations of the protocol uses an i.i.d.\ value $p\sim\cD_\lambda$, we obtain an i.i.d.\ value $\wh x\sim \cD_\lambda$ from each iteration of the protocol.

    To complete the proof of \Cref{lem:cert_sample-1}, we argue that $\dtv(\cD_\lambda, \cD)\leq \delta$. 

    \begin{claim}
        \label{claim:lambda-precise-distribution}
        Fix $d,\lambda\in\N$ with $\lambda>d$ and a distribution $\cD$ over $\zo^d$. Let $\cD_\lambda$ be the distribution defined by $\cD_{\lambda}(x) = \frac{\floor{\cD(x)}_{\lambda}}{\sum_{x\in\zo^d} \floor{\cD(x)}_{\lambda}}$, where $\floor{y}_\lambda$ denotes $2^{-\lambda}\cdot\floor{2^{\lambda}\cdot y}$ for all $y\in\R$. Then $\dtv(\cD_\lambda,\cD)\leq 2^{d+1-\lambda}$. 
    \end{claim}
    \begin{proof}
        Observe that for each $y\in[0,1]$ we have $\abs{y-\floor{y}_\lambda}\leq 2^{-\lambda}$, and therefore $\abs{\floor{\cD(x)}_\lambda - \cD(x)}\leq 2^{-\lambda}$ for each $x\in\zo^d$.
        It follows that $\sum_{x\in\zo^d}\abs{\floor{\cD(x)}_\lambda - \cD(x)}\leq 2^{d-\lambda}$ and hence, since $\sum_{x\in\zo^d} \cD(x)=1$, that $\sum_{x\in\zo^d}\floor{\cD(x)}_\lambda\in\brackets{1\pm 2^{d-\lambda}}$. 
        Thus, for $T_\lambda=\sum_{x\in\zo^d} \floor{\cD(x)}_\lambda$ we have $T_\lambda\in\brackets{1\pm 2^{d-\lambda}}$.
        Let $a\in [\pm 2^{d-\lambda}]$ be such that $T_\lambda = 1+a$. Then
        \begin{align*}
            \dtv(\cD_\lambda,\cD) 
            & = \frac12\sum_{x\in\zo^d}\abs{\cD_\lambda(x) - \cD(x)}\\
            & = \frac{1}{2\cdot T_\lambda}\sum \abs{\floor{\cD(x)}_\lambda - T_\lambda\cdot\cD(x)}\\
            &\leq \frac{1}{2\cdot T_\lambda}\paren{\sum \abs{\floor{\cD(x)}_\lambda - (1+a)\cdot \cD(x)}}\\
            &\leq \frac{1}{2\cdot T_\lambda}\paren{\sum \abs{\floor{\cD(x)}_\lambda - \cD(x)} + 2^{d-\lambda}\sum \cD(x)}\\
            &\leq \frac{1}{2\cdot T_\lambda}\paren{2^{d-\lambda} + 2^{d-\lambda}}\\
            &\leq \frac{2^{d-\lambda}}{1-2^{d-\lambda}}\leq 2^{d+1-\lambda},
        \end{align*}
        completing the proof.
    \end{proof}

    To see why the communication complexity holds, observe the protocol consists of four calls to certifiable sum and the communication of a constant number of additional terms with $\lambda = d + 1 + \log \frac{1}{\delta}$ bits of precision. Thus, up to constant factors the runtime and communication complexity are inherited from certifiable sum.
\end{proof}

In the remainder of the section we complete the proof of our second tool, the certifiable sum protocol (\Cref{lem:cert_sum}).
At a high level, the protocol works as follows: For all $b\in\zo$ let $C_b=\sets{x\in\zo^d \mid x_1=b}$. First, $\cP_0$ claims that the sum over $\zo^d$ is $\wh T$, and that the sum over $C_0$ and $C_1$ is $\wh T_0$ and $\wh T_1$ respectively. Since $C_0$ and $C_1$ are disjoint and $C_0\cup C_1=\zo^d$, we must have $\wh T_0+ \wh T_1=\wh T$. The key observation is that if $\wh T \neq T$ (the correct value of the sum), then either $\wh T_0\neq T_0$ or $\wh T_1\neq T_1$ (the correct values of the sum on $C_0$ and $C_1$). The verifier can then ask $\cP_1$ for the bit $b$ such that $\wh T_b\neq T_b$, and repeat the above steps on $C_b$. After $d$ rounds there is only a single point remaining in the subcube, and hence the verifier can check if the claim made by $\cP_0$ is correct by making a single query to $t$. The protocol to certify a claim made by $\cP_1$ is identical but has the roles of $\cP_0$ and $\cP_1$ switched. 

\begin{proof}[Proof of \Cref{lem:cert_sum}]
    Our protocol consists of two phases. In the first phase the verifier uses $\cP_0$ to verify the claim made by $\cP_1$, and in the second phase the verifier uses $\cP_1$ to verify the claim made by $\cP_0$. The final protocol executes both phases and returns the first verified claim (this will be correct if at least one prover is honest). We first define the following notation: for all $t:\zo^d\to \lamprec$ and points $z\in\zo^j$ for some $j<d$, let
     \[
     t_z(x) = 
     \begin{cases} 
          t(x) &  \text{if }x_i=z_i\text{, for all $i\in[j]$;}  \\
          0 & \text{otherwise,} 
    \end{cases}
    \]
    and define $T_z = \sum_{x\in\zo^d} t_z(x)$. Fix $b\in\zo$, and define the protocol $\brackets{\cP_0(t),\cP_1(t),\cV^t}_b$ in \Cref{prot:cert_sum}.
     \protocoltop{$\brackets{\cP^t_0,\cP^t_1,\cV^t}_b(d)$}{certifiable sum}{prot:cert_sum}{
    \begin{enumerate}
        \item $\cP_b$: Let $z\gets\emptyset$. Send $(\wh T_z, \wh T_{z0}, \wh T_{z1}) \gets (T_z, T_{z0}, T_{z1})$ to $\cV$.
        \item $\cV$: %
        If $\wh T_z\neq \wh T_{z0} + \wh T_{z1}$ then output $\perp$. Otherwise, send $(\wh T_z, \wh T_{z0}, \wh T_{z1})$ to $\cP_{1-b}$.
        \item $\cP_{1-b}$: If there exists $j\in\zo$ such that $T_{zj}\neq \wh T_{zj}$ then send $j$ to $\cV$. Otherwise send $0$. 
        \item $\cV$: If $|z|<d-1$ then send $j$ to $\cP_b$ and repeat the protocol with $\wh T_z\gets \wh T_{zj}$ and $z\gets zj$. Otherwise, accept if and only if $t({zj})=\wh T_{zj}$. \label{line:cert-sum-acc-rej} %
    \end{enumerate}
    }

    Assume without loss of generality that $\cP_0$ is honest. First, we prove that for all $\cP^*_1$ the protocol $\brackets{\cP^t_0,\cP^*_1,\cV^t}_0$ accepts. %
    The proof proceeds by induction on $d$. For the base case, suppose $d=1$. Then, since $\cP_0$ is honest, $\wh T = T$, $\wh T_0 = T_0$, and $\wh T_1 = T_1$. Thus, for all $j\in\zo^d$ we have $t(j) = \wh T_j$, and thus the verifier always accepts. Now, suppose the claim holds up to some $d\in\N$ and consider the case of $d+1$. By the same argument as the base case, after the first round of the protocol we have $\wh T_j = T_j$. Since $t_j(x) = 0$ whenever $x_1\neq j$, round two of the protocol is identical to executing the protocol with function $t'_j:\zo^{d}\to \Q_{\lambda}$ given by $t'_j(x) = t_j(jx)$. Since the domain of $t'_j$ is $d$, the inductive hypothesis implies that the verifier will accept. 

    Next, we prove that $\brackets{\cP^t_0,\cP^*_1,\cV^t}_1$ rejects for all $\cP^*_1$ that send $\wh T\neq T$ in the first round. As before, the proof follows by induction on $d$. Consider the base case of $d=1$. If $\wh T\neq T$ and $\wh T_0 + \wh T_1 = \wh T$, then either $\wh T_0\neq T_0$ or $\wh T_1\neq T_1$. Since $\cP_0$ is assumed to be honest, it will send $j\in\zo$ such that $\wh T_j\neq T_j = t(j)$. Since $t(j)\neq \wh T_j$ the verifier will reject. Now, suppose the claim holds up to some $d\in\N$, and consider the case of $d+1$. If $\wh T\neq T$ in the first round, then $\wh T_j\neq T_j$ for some $j\in \zo$. Since $\cP_0$ is assumed to be honest, it will send $\cV$ the $j\in\zo$ such that $\wh T_j\neq T_j$. Observe that the next round of the protocol is identical to executing the protocol with the function $t'_j:\zo^d\to \Q_{\lambda}$ given by $t'_j(x) = t_j(jx)$, and with $\wh T'\neq T' = \sum_{x\in\zo^d} t'_j(x)$. By the inductive hypothesis, the verifier rejects.     

    To complete the proof, define the protocol $\brackets{\cP^t_0,\cP^t_1,\cV^t}$ as follows: for each $b\in\zo$ run protocols $\brackets{\cP^t_0,\cP^t_1,\cV^t}_b$, and return the $\wh T$ from the first round of an accepting execution. By the above arguments, the protocol $\brackets{\cP^t_0,\cP^t_1,\cV^t}$ always outputs $T$. 

    To see why the communication complexity holds, observe that sum $\sum_{x\in\zo^d} t(x)$ is bounded above by $2^{d+2\lambda}$ and is an integer multiple of $2^{-\lambda}$. Thus, $\wh T_z$ will require at most $O(d\lambda)$ bits to send. Since the protocol uses $2d$ rounds, and in each round $O(d\lambda)$ bits are required to transmit $(\wh T_z, \wh T_{z0}, \wh T_{z1})$, it immediately follows that the overall communication complexity is $\lambda\poly d$. The runtime and query complexity of the verifier follows by inspection of \Cref{prot:cert_sum}.    
\end{proof}

\subsection{Refereed query delegation}
\label{sec:ref-query-del}

In this section, we show that a protocol where all parties are given query access to a deterministic oracle $\acc$ can be modified to a protocol where the verifier offloads nearly all of its queries to the provers and only makes a single query to $\acc$. The modification essentially preserves the guarantees of the original protocol (up to a small cost in communication complexity).  %
At a high level, the modification works as follows: each time the verifier would make a query to the oracle, it instead has each prover make that query to the oracle. Each prover then sends the query answer to the verifier. If the query answers match, the verifier continues the protocol with this answer; if the query answers do not match, then the verifier issues a single query to the oracle to figure out the true query answer, and then continues the protocol using only the query answers from the correct prover going forward.

\begin{lemma}[Refereed query delegation]
    \label{lem:cert_query}
    Fix a deterministic oracle $\acc$.
    Let $\brackets{\cP_0,\cP_1,\cV}$ be a protocol that, for all inputs $\kappa\in\R$ and $\lambda\in\N$, has communication complexity $C(\kappa,\lambda)$, verifier runtime $T_\cV(\kappa,\lambda)$, verifier query complexity $q_\cV(\kappa,\lambda)$, prover runtime $T_\cP(\kappa,\lambda)$, and prover query complexity $q_\cP(\kappa,\lambda)$. Additionally, assume all queries to $\acc$ and their answers can be specified using at most $\lambda$ bits.
    
    Then for all $b\in\zo$ and $\wt\cP^*_{1-b}$ there exists a $\cP^*_{1-b}$ such that
    \[
        \brackets{\wt\cP^\acc_b, \wt\cP^*_{1-b},\wt\cV^\acc}(\kappa,\lambda)
        =
        \brackets{\cP^\acc_b,\cP^*_{1-b},\cV^\acc}(\kappa,\lambda),
    \]
    where $\brackets{\wt\cP_0,\wt\cP_1,\wt\cV}$ are described in  \Cref{prot:query_offload}. Moreover, $\brackets{\wt\cP_0,\wt\cP_1,\wt\cV}$ has communication complexity $C(\kappa,\lambda) + 2\lambda\cdot q_\cV(\kappa,\lambda)$, verifier runtime $T_\cV(\kappa,\lambda)$, prover runtime $T_\cP(\kappa,\lambda)+ q_\cV(\kappa,\lambda)$, prover query complexity $q_\cP(\kappa,\lambda) + q_\cV(\kappa,\lambda)$, and verifier query complexity at most $1$.  
\end{lemma}

\begin{proof}[Proof of \Cref{lem:cert_query}]
    Consider the following protocol:

    \protocoltop{$\brackets{\wt\cP^{\acc}_0,\wt\cP^{\acc}_1,\wt\cV^\acc}(\kappa,\lambda)$}{refereed query delegation}{prot:query_offload}{
    \begin{enumerate}
        \item Simulate $[\cP_0^\acc,\cP_1^\acc,\cV^\acc](\kappa,\lambda)$, except answer $\cV$'s queries to $\acc$ using the following procedure:
        \begin{enumerate}
            \item $\wt\cV$ asks both $\wt\cP_0$ and $\wt\cP_1$ to answer the query using their access to $\acc$.
            \item If both provers agree, continue the protocol using this query answer.
            \item If the provers disagree, $\wt\cV$ makes a single query to $\acc$ to determine the lying prover $(1- b)$, and then uses the answers from 
            $\wt\cP_{b}$ for all subsequent queries to $\acc$.\footnote{For the competing prover setting: At the cost of one additional query, the verifier can catch a malicious $\wt\cP_b$ in this phase by asking $\wt\cP_{1-b}$ which of $\wt\cP_b$'s queries are incorrect at the end of the protocol.}
        \end{enumerate}
    \end{enumerate}
    }
   The communication complexity, runtime, and query complexity follow immediately from the fact that \Cref{prot:query_offload} simply runs $[\cP_0,\cP_1,\cV]$ once, and has each prover make at most $q_\cV(\kappa,\lambda)$ additional queries to $\acc$.

    Consider the protocol above run with (malicious) prover $\wt\cP^*_{1-b}$.
    We show there exists a prover $\cP^*_{1-b}$ such that
    \[
        \brackets{\wt\cP^\acc_b, \wt\cP^*_{1-b},\wt\cV^\acc}(\kappa,\lambda)
        =
        \brackets{\cP^\acc_b,\cP^*_{1-b},\cV^\acc}(\kappa,\lambda).
    \]

    Let $\cP^*_{1-b}$ be the prover that is identical to $\wt\cP^*_{1-b}$ (except it simulates query requests from $\cV$ and does not actually send query answers to $\cV$).\footnote{Morally, think of $\cP^*_{1-b}$ and $\wt\cP^*_{1-b}$ as identical. However, a subtlety arises that the behavior of $\wt\cP^*_{1-b}$ may depend on the queries it answers for $\cV$---e.g., $\wt\cP^*_{1-b}$ may execute strategy ``a'' when asked an oracle query by $\cV$, and strategy ``b'' when $\cV$ does not ask any oracle queries. To ensure that $\cP^*_{1-b}$ executes the correct malicious prover strategy, it simulates $\wt\cP^*_{1-b}$ with the queries from $\cV$.}
    We see that, if $\cV$ is simulated using queries to $\acc$ that are answered correctly, then \Cref{prot:query_offload} has exactly the same distribution of outputs as $\brackets{\cP^\acc_b,\cP^*_{1-b},\cV^\acc}(\kappa,\lambda)$.

    To complete the proof, we show that verifier $\wt\cV$ correctly answers all of $\cV$'s queries to $\acc$ using at most $1$ query to $\acc$. First recall that the honest prover $\wt\cP_b$ always answers queries truthfully. Additionally, $\wt\cV$ determines which prover is telling the truth at the first instance on which $\wt\cP^*_{1-b}$ and $\wt\cP^\acc_b$ disagree by making a single query to $\acc$. After this query $\wt\cV$ only uses answers provided by the honest prover, and hence all of the answers it provides to $\cV$ are correct.
\end{proof}

\section{Refereed learning protocols}
\label{sec:rlp}

In this section we prove our two main results. In \Cref{sec:rlp-zo} we construct a $(1+\eps)$-refereed learning protocol for the zero-one loss function; in \Cref{sec:rlp-gen} we construct a $(3+\eps)$-refereed learning protocol for metric loss functions.

Throughout this section we use the following simpler version of \Cref{def:rlp_concrete}, which only has a multiplicative slack term and fixes the oracle access model. Recall that $Q_\cD$ denotes the probability mass function of a distribution $\cD$.

\begin{definition}[Refereed learning protocol---multiplicative error with fixed oracles]
    \label{def:rlp_concrete_mult}
    Fix a range $\cY$ with metric $\ell$, dimension $d\in\N$, slack $\alpha\geq 1$, and soundness error $\beta\geq 0$. Let $\cH\subset\sets{h:\zo^d\to\cY}$ and $\mathds D\subset \sets{\cD\mid \supp(\cD)\subset\zo^d}$. A protocol $\brackets{\cP_0,\cP_1,\cV}$ is an \emph{$(\alpha,\beta)$-refereed learning protocol for $\cH$ and $\mathds D$ with respect to $\ell$}, if for all distributions $\cD\in\mathds D$, functions $f:\zo^d\to\cY$ and $h_0,h_1\in\cH$, the following holds:
     \begin{itemize}
        \item For all $b\in\zo$ and $\cP^*_{1-b}$, the bit $\vb\gets \brackets{\cP^{h_0,h_1,Q_\cD}_b, \cP^*_{1-b}, \cV^{f,h_0,h_1,Q_\cD}}$ satisfies        
    \[
        \Pr\brackets{\cL_{\cD}(h_\vb,f \mid \ell) \leq \alpha\cdot \min_{s\in\zo} \cL_{\cD}(h_s, f \mid \ell)} \geq 1-\beta,
    \]
    where the randomness is over the coins of the verifier, the honest prover, and the oracles. 
    \end{itemize}
\end{definition}

The provers in \Cref{def:rlp_concrete_mult} do not have query access to $f$, and in the protocols of \Cref{thm:rlp_zo,thm:rlp_gen} the verifier makes a number of queries to $f$ that depends only on $\alpha$.
In \Cref{sec:offload-queries} we
describe how, if we also give the provers query access to $f$,
the verifier can offload all but a single query to the provers---that is, we show how the protocols can be modified so that the verifier makes at most one query to either $h_0,h_1,f$, or $Q_\cD$.

\subsection{Refereed learning for zero-one loss}
\label{sec:rlp-zo}

We first consider refereed learning protocols for the special case of 
the \emph{zero-one metric} defined by $\lzo(y,y')=1$ if $y\neq y'$ and $0$ otherwise. \Cref{thm:rlp_zo} states that for all $\eps,\beta>0$ there exists an $(\alpha,\beta)$-refereed learning protocol for $\alpha=1+\eps$, with respect to $\lzo$.

\begin{theorem}[Refereed learning protocol for zero-one loss]
    \label{thm:rlp_zo}
    Fix range $\cY$ and $\lambda\in\N$. There exists a protocol $\brackets{\cP_0,\cP_1,\cV}$ that, for all inputs $d\in\N$ and $\eps,\beta>0$, is a $(1+\eps,\beta)$-refereed learning protocol for $\fF$ and $\mathds D_{\lambda}$ with respect to $\lzo$. The protocol has communication complexity and verifier runtime $\lambda\paren{1+\frac{1}{\eps}}^2\log\paren{1+\frac{1}{\eps}}\log\frac1\beta\cdot\poly d = \wt O_{\lambda,\beta}\bparen{\paren{1+\frac{1}{\eps}}^2\cdot \poly d}$, and the verifier makes $O\bparen{\paren{1+\frac{1}{\eps^2}}\log\frac1\beta}$ queries to $f$.
\end{theorem}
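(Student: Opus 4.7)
The plan is to reduce the problem to estimating empirical losses on the disagreement set $S := \set{x \in \zo^d : h_0(x) \neq h_1(x)}$, and to use the tools of \Cref{sec:tools} to let the verifier sample from $\cD$ conditioned on $S$ without computing $S$ itself.

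Set $q := \Pr_{x \sim \cD}\brackets{x \in S}$, $p_s := \cL_\cD(h_s, f \mid \lzo)$, and $r_s := \Pr_{x \sim \cD_S}\brackets{h_s(x) \neq f(x)}$ for $s \in \zo$, where $\cD_S$ denotes $\cD$ conditioned on $S$. Since $h_0 = h_1$ off of $S$, there is a common term $p_{\mathsf{com}}$ such that $p_s = p_{\mathsf{com}} + q \cdot r_s$, so the bit minimizing $p_s$ is exactly the bit minimizing $r_s$; call it $s^\star$. For every $x \in S$ at most one of $h_0(x), h_1(x)$ can equal $f(x)$, whence $r_0 + r_1 \geq 1$, so writing $\delta_r := |r_0 - r_1|$ we obtain $r_{s^\star} \geq (1-\delta_r)/2$. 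Combined with $p_{s^\star} \geq q\, r_{s^\star}$, a short calculation gives the key implication: $\delta_r \leq \eps/(2+\eps)$ already forces $p_{1-s^\star} \leq (1+\eps) p_{s^\star}$. It therefore suffices to build a protocol whose output equals $s^\star$ with probability $\geq 1-\beta$ whenever $\delta_r > \eps/(2+\eps)$.

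The protocol runs in three steps. First, the verifier executes certifiable sum (\Cref{lem:cert_sum}) on $t(x) := Q_\cD(x) \cdot \Ind\brackets{h_0(x) \neq h_1(x)}$---which is $\lambda$-precise since $\cD \in \mathds D_\lambda$ and $\Ind\brackets{\cdot}\in\zo$---to obtain $q$; if $q = 0$ it outputs $0$. Otherwise, it uses $q$ together with its oracle access to $Q_\cD$, $h_0$, and $h_1$ to answer queries to $Q_{\cD_S}(x) = Q_\cD(x) \cdot \Ind\brackets{x \in S}/q$, and invokes certifiable sample (\Cref{lem:cert_sample}) with this oracle, TV parameter $\delta = \Theta(\eps)$, and sample size $m = \Theta\bparen{(1+1/\eps^2)\log(1/\beta)}$, receiving points $x_1,\dots,x_m$ from some $\wh\cD$ with $\dtv(\wh\cD,\cD_S) \leq \delta$. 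Finally it queries $f(x_i)$ for each $i$, computes $\hat r_s := \frac1m \sum_i \Ind\brackets{h_s(x_i)\neq f(x_i)}$, and outputs the bit $\vb$ with $\hat r_\vb = \min_s \hat r_s$. Correctness follows by combining the TV bound $\abs{\Ex_{x \sim \wh\cD}\brackets{\Ind\brackets{h_s(x) \neq f(x)}} - r_s} \leq \delta$ with Hoeffding's inequality and a union bound over $s \in \zo$, giving $\abs{\hat r_s - r_s} \leq \delta + O\bparen{\sqrt{\log(1/\beta)/m}}$ with probability $\geq 1-\beta$; for suitable constants both terms fall below $\eps/(4(2+\eps))$, so $\hat r_s$ correctly orders $r_0, r_1$ whenever $\delta_r > \eps/(2+\eps)$. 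The communication and verifier runtime are dominated by one call each to \Cref{lem:cert_sum,lem:cert_sample} with the stated parameters, and the verifier makes exactly $m$ queries to $f$, matching the bounds in the theorem.

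I expect the main subtlety to be the lower bound $p_{s^\star} \geq q(1-\delta_r)/2$: a priori one might worry that $p_{s^\star}$ is dominated by $p_{\mathsf{com}}$, which is uncoupled from $\delta_r$, but discarding $p_{\mathsf{com}}$ and using only $p_{s^\star} \geq q\, r_{s^\star}$ is already sharp enough to carry the multiplicative bound through. The remaining checks---the degenerate case $q = 0$ (where $\cD$ places no mass on the disagreement set, so any output is optimal) and the $\lambda$-precision of the function fed into certifiable sum---are routine.
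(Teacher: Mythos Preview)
Your proposal is correct and follows essentially the same approach as the paper: compute $q=\cD(S)$ via certifiable sum, sample from $\cD|_S$ via certifiable sample with $\delta=\Theta(\eps)$ and $m=\Theta\bparen{(1+1/\eps^2)\log(1/\beta)}$, and compare empirical disagreement rates using Hoeffding. The only difference is packaging: the paper states the key lemma in the forward direction (if $p_{1-s^\star}>(1+\eps)p_{s^\star}$ then $r_{s^\star}<\tfrac12-\tfrac{\eps}{2(2+\eps)}$), whereas you argue the contrapositive via $r_0+r_1\geq 1$ and $r_{s^\star}\geq(1-\delta_r)/2$; your use of $r_0+r_1\geq 1$ rather than equality is in fact the right statement for general $\cY$.
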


\begin{proof}[Proof of \Cref{thm:rlp_zo}]
    We define the protocol $\brackets{\cP_0,\cP_1,\cV}$ in \Cref{prot:zo_rlp}.

    \protocoltop{$\brackets{\cP^{h_0,h_1,Q_\cD}_0,\cP^{h_0,h_1,Q_\cD}_1,\cV^{f,h_0,h_1,Q_\cD}}\paren{d,\eps,\beta}$}{refereed learning for zero-one loss}{prot:zo_rlp}{
    \begin{enumerate}
        \item $\cP_0,\cP_1$: Let $S\gets\sets{x\in\zo^d \mid h_0(x)\neq h_1(x)}$. %
        \item $\cV$: Obtain $p_S=\cD(S)$ using certifiable sum (\Cref{lem:cert_sum}) with $t(x)=\Ind\brackets{x\in S}\cdot\cD(x)$ and $\lambda\gets \lambda$. If $p_S=0$ then output $\vb\sim\zo$.
        \item $\cV:$ Set $\delta \gets \frac{\eps}{4(2+\eps)}$ and $m\gets \frac{\log1/\beta}{\delta^2}$. Execute certifiable sample (\Cref{lem:cert_sample}) with distribution $\cD|_S$ to draw $m$ samples $x_1,\dots, x_m\sim\wh\cD$ with distance parameter $\delta$.
        \item $\cV:$ Query $f$ on $x_1,\dots, x_m$ and output $\vb=\arg\min_{s\in\zo}\abs{\sets{i\in[m] : h_s(x_i)\neq f(x_i)}}$.
    \end{enumerate}}
    
    First, we argue that \Cref{prot:zo_rlp} satisfies the soundness condition of \Cref{def:rlp_concrete_mult}. Assume without loss of generality that $\cL_\cD(h_1,f\mid \lzo) > (1+\eps)\cdot \cL_\cD(h_0,f\mid \lzo)$. Since $\lzo$ is the zero-one metric, this is equivalent to the assumption that $\Pr_\cD\brackets{h_1(x)\neq f(x)}>(1+\eps)\cdot \Pr_\cD\brackets{h_0(x)\neq f(x)}$. To prove that the verifier in \Cref{prot:zo_rlp} outputs the correct bit, we prove \Cref{claim:distance_test}, which roughly states that for $x\sim \cD|_S$, we have $h_1(x)\neq f(x)$ with probability at least $\frac 12 + \frac{\eps}{1+\eps}$. 
    
    \begin{claim}
    \label{claim:distance_test}
    Fix $d\in\N$, functions $f,h_0,h_1:\zo^d\to\cY$, distribution $\cD$ over $\zo^d$, and $\eps>0$. Assume $\Pr_\cD\brackets{h_1(x)\neq f(x)} > (1+\eps)\cdot \Pr_\cD\brackets{h_0(x)\neq f(x)}$, and let $S= \sets{x : h_0(x)\neq h_1(x)}$. Then,
    \[
    \Pr_{x\sim \cD|_S}\brackets{h_0(x)\neq f(x)}< \frac 12 - \frac{\eps}{2(2+\eps)}.
    \]
    \end{claim}

\begin{proof}
    By the hypothesis on $h_0,h_1$, and $f$, and the law of total probability,
    \begin{align}
    &\eps\cdot\Pr_{x\sim\cD}\brackets{h_0(x)\neq f(x)}< \Pr_{x\sim \cD}\brackets{h_1(x)\neq f(x)} - \Pr_{x\sim \cD}\brackets{h_0(x)\neq f(x)}\label{eq:distance_test-1}\\
    &=\paren{\Pr_{x\sim \cD}\brackets{h_1(x)\neq f(x) \mid x\in S} - \Pr_{x\sim \cD}\brackets{h_0(x)\neq f(x) \mid x\in S}}\cdot\Pr_{x\sim\cD}\brackets{x\in S}\nonumber\\
    &+\paren{\Pr_{x\sim \cD}\brackets{h_1(x)\neq f(x) \mid x\not\in S} - \Pr_{x\sim \cD}\brackets{h_0(x)\neq f(x) \mid x\not\in S}}\cdot\Pr_{x\sim\cD}\brackets{x\not\in S}.\nonumber
    \end{align}
    Since $h_0(x)=h_1(x)$ for all $x\not\in S$, the difference in the second term of the sum is
    \[
    \Pr_{x\sim \cD}\brackets{h_1(x)\neq f(x) \mid x\not\in S} - \Pr_{x\sim \cD}\brackets{h_0(x)\neq f(x) \mid x\not\in S}=0.
    \]
    so rearranging \eqref{eq:distance_test-1} yields 
    \begin{align}
    \label{eq:distance_test-2}
    \Pr_{x\sim \cD}\brackets{h_1(x)\neq f(x) \mid x\in S} - \Pr_{x\sim \cD}\brackets{h_0(x)\neq f(x) \mid x\in S}> \frac{\eps\cdot\Pr_{x\sim\cD}\brackets{h_0(x)\neq f(x)}}{\Pr_{x\sim\cD}\brackets{x\in S}}.
    \end{align}
    By the definition of $S$, the events $h_0(x)\neq f(x)$ and $h_1(x)\neq f(x)$ are disjoint, and hence
    \begin{align*}
        1 
        &= \Pr_{\cD|_S}\brackets{h_1(x)\neq f(x)}+\Pr_{\cD|_S}\brackets{h_0(x)\neq f(x)}\\
        &= \Pr_{\cD|_S}\brackets{h_1(x)\neq f(x)}-\Pr_{\cD|_S}\brackets{h_0(x)\neq f(x)}+2\Pr_{\cD|_S}\brackets{h_0(x)\neq f(x)}\\
        &> \frac{\eps\cdot\Pr_{x\sim\cD}\brackets{h_0(x)\neq f(x)}}{\Pr_{x\sim\cD}\brackets{x\in S}} + 2\Pr_{\cD|_S}\brackets{h_0(x)\neq f(x)}\\
        & \geq (2+\eps)\Pr_{x\sim\cD|_S}\brackets{h_0(x)\neq f(x)}
    \end{align*}
    where the second to last inequality follows from \eqref{eq:distance_test-2}, and the last equality follows by applying the law of total probability to the numerator. Rearranging terms yields the desired conclusion.%
\end{proof}

    To complete the proof of \Cref{thm:rlp_zo}, observe that by \Cref{lem:cert_sum}, the verifier correctly obtains $p_S = \cD(S)$. Since $\cD|_S(x) = \frac{\cD(x)\cdot\Ind\brackets{x\in S}}{\cD(S)}$, the verifier has query access to $Q_{\cD|_S}$, the probability mass function of $\cD|_S$. Thus, by \Cref{lem:cert_sample}, we have $\dtv\bparen{\wh\cD,\cD|_S}\leq \delta$ and therefore, by \Cref{claim:distance_test} and the definition of $S$, we have
    \[
        \Pr_{x\sim \wh\cD}\brackets{h_0(x)\neq f(x)}
        <
        \frac 12 - \frac{\eps}{2(2+\eps)}+\delta.
    \]
    To see why the verifier outputs $0$ with probability at least $1-\beta$, let $\wh p = \frac1m\sum_{i\in [m]}\Ind\brackets{h_0(x_i)\neq f(x_i)}$.
    Since $\Ex[\wh p]=\Pr[h_0(x)\neq f(x)]$, Hoeffding's inequality and our setting of $m$ in \Cref{prot:zo_rlp} implies  
    \[
    \Pr\brackets{\abs{\wh p - \Pr[h_0(x)\neq f(x)]}\geq \delta}\leq 2\exp\paren{-2m/\delta^2}<\beta.
    \]
    It follows that $\wh p < \frac 12 - \frac{\eps}{2(2+\eps)} + 2\delta \leq \frac 12$ with probability at least $1-\beta$, and therefore $\cV$ will output $\vb=0$ with probability at least $1-\beta$. The runtime and communication complexity follow by inspection of \Cref{prot:zo_rlp}, and from the guarantees of \Cref{lem:cert_sample,lem:cert_sum}.    
\end{proof}

\subsection{Refereed learning for metric loss functions}
\label{sec:rlp-gen}

We next consider refereed learning protocols for any metric loss function (\Cref{def:loss_function}). \Cref{thm:rlp_gen} states that for all $\eps,\beta > 0$ there exists an $(\alpha, \beta)$-refereed learning protocol for $\alpha = 3 + \eps$, with respect to the chosen metric loss function. While the slack parameter $\alpha = 3 + \eps$ is worse (as compared to $\alpha = 1 + \eps$), this general protocol has the same time, communication, and query complexity guarantees as the protocol in \Cref{thm:rlp_zo}.

\begin{theorem}[Refereed learning protocol for general loss functions]
    \label{thm:rlp_gen} 
    Fix range $\cY$, $\lambda\in\N$, and $\lambda$-precise metric $\ell$ on $\cY$. There exists a protocol $\brackets{\cP_0,\cP_1,\cV}$ that, for all inputs $d\in\N$ and $\eps,\beta>0$, is a $(3+\eps,\beta)$-refereed learning protocol for $\fF$ and $\mathds D_\lambda$ with respect to $\ell$. The protocol has communication complexity and verifier runtime $\lambda\paren{1+\frac{1}{\eps^2}}\log\paren{1+\frac{1}{\eps}}\log\frac1\beta\cdot\poly d=\wt O_{\lambda,\beta}\bparen{\paren{1+\frac{1}{\eps^2}}\cdot\poly d}$, and the verifier makes $O\bparen{\paren{1+\frac{1}{\eps^2}}\log\frac1\beta}$ queries to $f$.
\end{theorem}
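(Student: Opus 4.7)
The plan is to extend the proof of \Cref{thm:rlp_zo} by having the verifier sample, with the help of the provers, from a rescaled distribution $\drescale{h_0,h_1}$ whose probability mass function is proportional to $\cD(x)\cdot\ell(h_0(x),h_1(x))$, rather than from $\cD$ conditioned on the disagreement set (the two coincide, up to normalization, in the zero-one case). First, the verifier invokes \Cref{lem:cert_sum} with $t(x)=\cD(x)\cdot\ell(h_0(x),h_1(x))$ and the given precision $\lambda$ to certifiably obtain $Z=\Ex_\cD\brackets{\ell(h_0,h_1)}$; if $Z=0$ then $h_0\equiv h_1$ on $\supp(\cD)$ and any output is correct. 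Otherwise, the verifier combines $Z$ with its query access to $Q_\cD$, $h_0$, $h_1$, and $\ell$ to obtain query access to $Q_{\drescale{h_0,h_1}}(x)=\cD(x)\cdot\ell(h_0(x),h_1(x))/Z$. It then invokes \Cref{lem:cert_sample} with this PMF, sample size $m=O((1+1/\eps^2)\log(1/\beta))$, and total-variation parameter $\delta=\Theta(\eps)$ to obtain $x_1,\dots,x_m\sim\wh\cD$, queries $f$ at each sample, and outputs the hypothesis $\vb$ that is closer to $f$ at a strict majority of the samples, i.e., $\vb=\arg\min_{s\in\zo}\abs{\set{i\in[m]:\ell(h_s(x_i),f(x_i))>\ell(h_{1-s}(x_i),f(x_i))}}$.

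The central analytical step is a metric analogue of \Cref{claim:distance_test}. Write $A(x)=\ell(h_0(x),f(x))$, $B(x)=\ell(h_1(x),f(x))$, and $r(x)=\ell(h_0(x),h_1(x))$; on the support of $\drescale{h_0,h_1}$ we have $r(x)>0$, and the triangle inequality for $\ell$ yields both $|A(x)-B(x)|\leq r(x)$ and $r(x)\leq A(x)+B(x)$. Assume WLOG $\cL_\cD(h_1,f\mid\ell)>(3+\eps)\cdot\cL_\cD(h_0,f\mid\ell)$. Then the bounded statistic $T(x)=(B(x)-A(x))/r(x)\in[-1,1]$ satisfies
\[
    \Ex_{x\sim\drescale{h_0,h_1}}\brackets{T(x)}\;=\;\frac{\cL_\cD(h_1,f\mid\ell)-\cL_\cD(h_0,f\mid\ell)}{Z}\;\geq\;\frac{2+\eps}{4+\eps}\;=\;\frac{1}{2}+\frac{\eps}{2(4+\eps)},
\]
where the inequality follows from $Z\leq\cL_\cD(h_0,f\mid\ell)+\cL_\cD(h_1,f\mid\ell)$ together with the hypothesis. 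Since $T(x)\leq\Ind\brackets{B(x)>A(x)}$ pointwise on the support, this also gives $\Pr_{\drescale{h_0,h_1}}\brackets{B(x)>A(x)}\geq 1/2+\Omega(\eps)$: at samples drawn from $\drescale{h_0,h_1}$, the worse hypothesis $h_1$ is farther from $f$ by an $\Omega(\eps)$ majority margin.

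Given this $\Omega(\eps)$-biased coin, Hoeffding's inequality on $\Ind\brackets{B(x_i)>A(x_i)}\in\set{0,1}$ shows that $m=O((1+1/\eps^2)\log(1/\beta))$ samples suffice for a strict majority to satisfy $B(x_i)>A(x_i)$ with probability $1-\beta$, and hence for the verifier to output $\vb=0$. Choosing $\delta=\Theta(\eps)$ in the call to \Cref{lem:cert_sample} absorbs the $\dtv$-approximation error between $\wh\cD$ and $\drescale{h_0,h_1}$ into the $\Omega(\eps)$ margin. The communication, runtime, and query bounds then follow by combining the $\lambda\cdot\poly(d)$ per-call cost of \Cref{lem:cert_sum}, the $(m+\delta^{-1}\log\delta^{-1})\cdot\poly(d)$ cost of \Cref{lem:cert_sample}, and the $m$ queries to $f$, yielding the stated $\lambda(1+1/\eps^2)\log(1+1/\eps)\log(1/\beta)\cdot\poly(d)$ complexity.

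The main obstacle, relative to the zero-one case, is that $A(x)$ or $B(x)$ may individually be arbitrarily large even when $r(x)$ is small, so naively sampling from a disagreement set and comparing raw empirical losses can be dominated by a few heavy-tail points. The rescaling by $\ell(h_0,h_1)$ precisely upweights the points where $h_0$ and $h_1$ differ substantially, and the triangle inequality then converts that differential into a loss gap concentrated on the worse hypothesis---at the price of a multiplicative slack of $(3+\eps)$ instead of the $(1+\eps)$ attainable in the zero-one case.
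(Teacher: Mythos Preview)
Your approach is correct and essentially mirrors the paper's: both define the loss-rescaled distribution $\drescale{h_0,h_1}$, obtain its normalizing constant via certifiable sum, draw $m=O((1+1/\eps^2)\log(1/\beta))$ samples from it via certifiable sample with $\delta=\Theta(\eps)$, and finish with Hoeffding. The one difference worth noting is the per-sample decision statistic. The paper has the verifier average the continuous ratio $r_b(x)=\ell(h_b(x),f(x))/\bigl[\ell(h_0(x),f(x))+\ell(h_1(x),f(x))\bigr]\in[0,1]$ and output $\arg\min_b \frac{1}{m}\sum_i r_b(x_i)$; its key claim shows $\Ex_{\drescale{h_0,h_1}}[r_b]<\tfrac12-\tfrac{\eps}{2(2+\eps)}$ under the $(3+\eps)$-gap hypothesis. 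You instead reduce to a binary majority vote, passing through the intermediate statistic $T(x)=(B-A)/r\in[-1,1]$ (whose expectation under $\drescale{h_0,h_1}$ is exactly $(\cL_\cD(h_1,f)-\cL_\cD(h_0,f))/Z$) and the pointwise bound $T(x)\le\Ind[B(x)>A(x)]$. Your route is arguably more elementary, at the cost of a slightly smaller margin $\tfrac{\eps}{2(4+\eps)}$ versus the paper's $\tfrac{\eps}{2(2+\eps)}$; the asymptotic complexity is identical.
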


\begin{proof}[Proof of \Cref{thm:rlp_gen}]
    In order to construct our protocol, we first introduce a scaled version of the distribution $\cD$ called $\drescale{h_0, h_1}$. Intuitively, $\drescale{h_0,h_1}$ assigns more probability mass to points $x$ where $\ell\bigl(h_0(x), h_1(x)\bigr)$ is large.
    Since whenever $\ell\bigl(h_0(x), h_1(x)\bigr)$ is large, it must be the case that either $\ell\bigl(h_0(x), f(x)\bigr)$ or $\ell\bigl(h_1(x), f(x)\bigr)$ is large, sampling points $x$ with higher probability where $\ell\bigl( h_0(x), h_1(x) \bigr)$ is larger allows the verifier to distinguish $h_0$ and $h_1$ more easily. 

    \begin{definition}[Loss-rescaled distribution]
    \label{def:drescale}
        For all $d\in\N$, distributions $\cD$ over $\zo^d$, sets $\cY$ with metric $\ell$, and functions $h_0,h_1:\zo^d \to \cY$, define the \emph{loss-rescaled distribution} $\drescale{h_0, h_1}$ via the density
        \[
            \drescale{h_0, h_1}(x)
            :=
            \cD(x) \cdot
            \frac{\ell\bigl(h_0(x), h_1(x)\bigr)}{\Ex_{x\sim \cD}\brackets{\ell\bigl(h_0(x), h_1(x)\bigr)}}.
        \]
    \end{definition}

    We define the protocol $\brackets{\cP_0,\cP_1,\cV}$ in \Cref{prot:gen_rlp}.
    
     \protocoltop{$\brackets{\cP^{h_0,h_1,Q_\cD}_0,\cP^{h_0,h_1,Q_\cD}_1,\cV^{f,h_0,h_1,Q_\cD}}(d,\eps,\beta)$}{refereed learning for metric loss}{prot:gen_rlp}{
    \begin{enumerate}
        \item $\cV$: Execute certifiable sum (\Cref{lem:cert_sum}) with $t(x) := \ell(h_0(x),h_1(x))\cdot \cD(x)$ and $\lambda\gets2\lambda$ to compute $\mu\gets \Ex_{x\sim \cD}\brackets{\ell\bigl(h_0(x),h_1(x)\bigr)}$. If $\mu=0$ then output $\vb\sim\zo$. 
        \item $\cV$: Set $\delta\gets \frac{\eps}{4(2+\eps)}$ and $m\gets \frac {\log1/\beta}{\delta^2}$. Execute certifiable sample (\Cref{lem:cert_sample}) with distribution $\drescale{h_0, h_1}$ to draw $m$ samples $x_1,\dots, x_m\sim\wh\cD$ with distance parameter $\delta$.
        \item $\cV$: Query $f$ on $x_1,\dots, x_m$ and for each $b\in\zo$ let 
        \[
        \wh R_b \gets \frac{1}{m}\sum_{i\in[m]} \frac{\ell\bigl(h_b(x_i),f(x_i)\bigr)}{\ell\bigl(h_0(x_i),f(x_i)\bigr) + \ell\bigl(h_1(x_i),f(x_i)\bigr)}.
        \]
        \item $\cV$: Output $\vb \gets \arg\min_{b\in\zo} \wh R_b$        %
    \end{enumerate}}

    In order to prove the soundness condition, we argue that the bit $\vb$ output by the verifier satisfies $\cL_\cD(h_\vb,f)\leq (3+\eps)\cL_\cD(h_{1-\vb},f)$ with probability at least $1-\beta$. First, recall that by \Cref{lem:cert_sum,lem:cert_sample}, the verifier $\cV$ obtains the expectation $\Ex_{x\sim\cD}\brackets{\ell(h_0(x), h_1(x))}$, and $m$ samples $x_1,\dots, x_m$ from a distribution $\wh \cD$ such that $\dtv\bparen{\wh\cD, \drescale{h_0, h_1}}\leq \delta$. We assume without loss of generality that $\Ex_{x\sim \cD}\brackets{\ell\bigl(h_0(x),h_1(x)\bigr)}\neq 0$ since otherwise $h_0=h_1$.
    
    The proof of correctness proceeds in two major steps. First, in \Cref{claim:gen-dist-test} we will show that for all $b\in\zo$ the statistic defined by 
    \begin{align}
    r_b(x) := \frac{\ell\bparen{h_b(x),f(x)}}{\ell\bparen{h_0(x),f(x)} + \ell\bparen{h_1(x),f(x)}}
    \quad \text{and} \quad
    R_b := \Ex_{x\sim \drescale{h_0, h_1}}
        \brackets{r_b(x)},\label{eq:R_def}
    \end{align}
    is less than $\frac 12 - \eps$ whenever $\cL_\cD(h_{1-b},f\mid \ell)>(3+\eps)\cdot \cL_\cD(h_b,f\mid \ell)$. Then, we will argue that the estimate $\wh R_b$ computed by $\cV$ in \Cref{prot:gen_rlp} is concentrated around $\Ex\brackets{\wh R_b}$, and that $R_b$ and $\Ex\brackets{\wh R_b}$ are close together. Combining the above with the fact that $R_0+R_1=1$ suffices to complete the proof. %

    \begin{claim}
    \label{claim:gen-dist-test}
        Fix a set $\cY$ with metric $\ell$, dimension $d\in\N$, distribution $\cD$ over $\zo^d$, functions $f,h_0,h_1:\zo^d\to\cY$, and $\eps>0$.
        For all $b\in\zo$, if $\cL_\cD\paren{h_{1-b}, f\mid \ell} > (3+\eps)\cdot \cL_\cD\paren{h_b, f \mid \ell}$ then
        \[
        R_b < \frac{1}{2} - \frac{\eps}{2(2+\eps)},
        \]
        where $R_b$ is defined in \eqref{eq:R_def}.
    \end{claim}

    \begin{proof}
        To avoid cluttered expressions, let
        $\ell_b(x) := \ell(h_b(x), f(x))$, %
        let $\Delta(x) := \ell(h_0(x), h_1(x))$ and let $\mu := \Ex_{x\sim \cD}[\Delta(x)]$.
        First, by the definition of $R_b$ and $\drescale{h_0, h_1}$, we have
        \begin{align*}
            R_b
            &=
            \Ex_{x\sim \drescale{h_0, h_1}}
            \brackets{
                \frac{\ell_b(x)}
                {\ell_0(x) + \ell_1(x)}
            } 
            =
            \Ex_{x\sim \cD}
            \brackets{
                \frac{\ell_b(x)}
                {\ell_0(x) + \ell_1(x)}
                \cdot
                \frac{\Delta(x)}
                {\mu}
            } 
            \leq
            \frac{\Ex_{x\sim \cD}
            \brackets{\ell_b(x)}}{\mu}, %
        \end{align*}
        where the last inequality follows since $\Delta(x) \leq \ell_0(x) + \ell_1(x)$ by the triangle inequality. Next, we apply the assumption that $\cL_\cD\paren{h_{1-b}, f\mid \ell} > (3+\eps)\cdot \cL_\cD\paren{h_b, f \mid \ell}$ to show that $\mu$ can be lower bounded as
        \begin{align*}
            \mu 
            = \Ex_{x\sim \cD} \brackets{\Delta(x)} 
            \geq \Ex_{x\sim\cD} \brackets{\ell_{1-b}(x) - \ell_b(x)} %
            > (2 + \eps) \cdot \Ex_{x\sim \cD}\brackets{\ell_b(x)}. %
        \end{align*}
        Substituting this bound on $\mu$ into our bound on $R_b$ gives us $R_b < \frac{1}{2+\eps}=\frac12 - \frac{\eps}{2(2+\eps)}$.%
    \end{proof}

    Now, since $r_b(x)\in [0,1]$ and each $x_i$ in \Cref{prot:gen_rlp} is sampled from $\wh\cD$ independently, Hoeffding's inequality implies %
    \begin{align}
        \Pr
        \brackets{ \abs{\wh R_b - \Ex\brackets{\wh R_b}} \geq \delta } \leq 2\exp\paren{-2m\delta^2}< \frac\beta2.\label{eq:r_hat_concentration}
    \end{align}
    Next, we bound the distance between $R_b$ and $\Ex\brackets{\wh R_b}$. Recall $R_b=\Ex_{x\sim \drescale{h_0,h_1}}\brackets{r_b(x)}$, and $\Ex\brackets{\wh R_b} = \Ex_{x\sim\wh \cD}\brackets{r_b(x)}$. Since $r_b(x)\in [0,1]$ and $\dtv\bparen{\wh\cD, \drescale{h_0, h_1}}\leq \delta$, we have
    \begin{align*}
        \abs{\Ex\brackets{\wh R_b}-R_b}
        &=
        \abs{\Ex_{x\sim \wh\cD}\brackets{r_b(x)}-\Ex_{x\sim \drescale{h_0, h_1}}\brackets{r_b(x)}} \\
        &= \abs{ \sum_{x\in\zo^d} \paren{ \wh\cD(x) - \drescale{{h_0,h_1}}(x) } \cdot r_b(x) }\\
        &\leq \abs{ \sum_{x\in\zo^d} \paren{ \wh\cD(x) - \drescale{{h_0,h_1}}(x) } \cdot \paren{r_b(x)-\frac 12}} + \abs{ \sum_{x\in\zo^d} \paren{ \wh\cD(x) - \drescale{{h_0,h_1}}(x) } \cdot \frac12 }\\
        &\leq\dtv(\wh\cD,\drescale{h_0,h_1})\leq \delta.
    \end{align*}

    Combining \eqref{eq:r_hat_concentration} and the above bound %
    yields $\abs{\wh R_b - R_b}\leq 2\delta$ for each $b\in\zo$ with probability at least $1-\beta$. To complete the proof, suppose $\cL_\cD\paren{h_{1-s},f}>(3+\eps)\cL_\cD\paren{h_s,f}$ for some $s\in\zo$. By \Cref{claim:gen-dist-test} we have $R_s<\frac12-\frac{\eps}{2(2+\eps)}$, and since $R_0+R_1=1$, this implies that $R_{1-s}>\frac 12 + \frac{\eps}{2(2+\eps)}$. If $|\wh R_b - R_b|\leq 2\delta$ for each $b\in\zo$ then by our choice of $\delta$ we have $\wh R_s<\frac 12<\wh R_{1-s}$. Since $\rho = \arg\min_{b\in\zo} \wh R_b$, we see that the verifier outputs $\vb=s$ with probability at least $1-\beta$. %
    The runtime, communication complexity, and query complexity guarantees follow from \Cref{lem:cert_sum,lem:cert_sample}, and by inspection of \Cref{prot:gen_rlp}.
\end{proof}

\subsection{Offloading queries to the provers}
\label{sec:offload-queries}

The protocols in the proofs of \Cref{thm:rlp_zo,thm:rlp_gen} do not have the provers make any queries to $f$.
However, if we give the provers query access to $f$, then  we can use the  ``refereed query delegation'' technique of \Cref{lem:cert_query} to offload all verifier queries to the provers, while keeping the complexity of the protocol essentially unchanged.
The resulting protocols incur an additional factor of $d+\lambda+\log|\cY|$ in communication complexity; however, the verifier in this modified protocol only makes at most $1$ query to either $f,h_0,h_1$, or $Q_\cD$.

\section{Lower bounds for refereed learning}
\label{sec:lower_bounds}

In this section we prove several lower bounds for refereed learning protocols with ``white-box'' access to $h_0$ and $h_1$---that is, the protocols receive a representation of $h_0$ and $h_1$ as input. 
Since the white-box versions of $\cP_0,\cP_1$, and $\cV$ can simulate their black-box counterparts, a lower bound against white-box refereed learning implies the same lower bound against the black-box version.\footnote{There is a caveat that lower bounds on runtime depend on the representation and may incur a factor that depends on the time complexity of evaluating $h_0$ and $h_1$. We deal with this issue explicitly in \Cref{sec:prover_runtime_lb}. On the other hand, white-box query and sample complexity lower bounds apply directly to the black-box setting.}
In what follows we show that for simple classes of functions and distributions, even white-box refereed learning protocols require: (1) query access to $f$ (\Cref{sec:lb-queries-needed}), (2) query access to $Q_\cD$ (\Cref{sec:dist_knowledge}), and (3) exponential-time provers (\Cref{sec:prover_runtime_lb}).

\paragraph{Lower bounds for weaker verifier access models.} In \Cref{thm:lower_bound_samples,thm:dist_knowledge}, we consider refereed learning protocols with additive and multiplicative error ($\alpha\geq 1$ and $\eta\in(0,1)$), and show that if instead of query access to $f$ and $Q_\cD$ the verifier either (1) has query access to $Q_\cD$ but only has access to $f$ via random labeled samples $(x,f(x))$, or (2) has query access to $f$, but only has access to $Q_\cD$ via samples $x\sim\cD$, then it requires sample complexity at least $\frac 1\eta$. This immediately implies that when $\eta\to 0$ (the setting of \Cref{thm:rlp_zo,thm:rlp_gen}), every refereed learning protocol requires verifier query access to $f$ and $Q_\cD$.

\paragraph{Time complexity lower bound.} In \Cref{thm:rlp_runtime_lb} we focus on the setting of $\eta=0$, and show how a refereed learning protocol can be used to decide 3-SAT with a constant factor overhead in running time. Subject to standard computational hardness assumptions, \Cref{thm:rlp_runtime_lb} justifies the exponential running time of the provers in \Cref{thm:rlp_zo,thm:rlp_gen}. %

\subsection{Verification with labeled samples}
\label{sec:lb-queries-needed}

In this section we prove a lower bound on the number of labeled samples needed for verification in the two-prover setting when the verifier only has access to $f$ via labeled samples (instead of queries). 

\begin{theorem}[Refereed learning with labeled samples]
    \label{thm:lower_bound_samples}
    Fix a representation of functions.
    Let $c>0$ be a sufficiently small absolute constant, and fix range $\cY=\zo$. For all $b\in\zo$ let $\acc_b(f,h_0,h_1,\cD)$ provide query access to $f,h_0,h_1$ and $Q_\cD$; and let $\acc_\cV(f,h_0,h_1,\cD)$ provide labeled samples $(x,f(x))$ where $x\sim \cD$, and query access to $h_0,h_1$, and $Q_\cD$. For all $d\in\N$, $\alpha\geq 1$ and $\eta\in(0,1)$, there exists a class of boolean functions $\cH$ and distributions $\mathds D$ such that every $(\alpha,\eta,1/3)$-refereed learning protocol for $\cH$ and $\mathds D$ with respect to $\lzo$ and oracles $\acc_0,\acc_1$ and $\acc_\cV$ requires verifier sample-complexity $\frac c\eta$. Moreover, the lower bound holds even if the representation of $h_0,h_1$, and $Q_\cD$ is given as input to all parties.
\end{theorem}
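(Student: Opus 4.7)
The plan is an elementary indistinguishability argument of the form sketched in the introduction, carried out on a two-point distribution to make the analysis completely combinatorial. Fix a point $x^\ast \in \zo^d$ distinct from $0^d$, let $\cH = \set{h_0, h_1}$ with $h_0 \equiv 0$ and $h_1(x) = \Ind\brackets{x = x^\ast}$, and let $\mathds D$ consist of the single distribution $\cD$ that assigns mass $2\eta$ to $x^\ast$ and mass $1 - 2\eta$ to $0^d$. For each $b \in \zo$, setting the ground truth to $f = h_b$ gives $\cL_\cD(h_b, f \mid \lzo) = 0$ and $\cL_\cD(h_{1-b}, f \mid \lzo) = 2\eta > \alpha \cdot 0 + \eta$, so the soundness condition forces the verifier to output $\vb = b$ with probability at least $2/3$ whenever $f = h_b$.

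Define the malicious strategy $\cP^\ast_{1-b}$ to run the honest protocol $\cP_{1-b}$, except that it answers its own oracle queries for $f$ using $h_{1-b}$ in place of the true ground truth $h_b$. The key observation is that, in both experiments (true ground truth $h_0$ or $h_1$), the effective behavior of prover $0$ is to run $\cP_0$ with the oracle tuple $(f, h_0, h_1, Q_\cD) = (h_0, h_0, h_1, Q_\cD)$, and the effective behavior of prover $1$ is to run $\cP_1$ with the tuple $(h_1, h_0, h_1, Q_\cD)$. Hence the joint distribution of all prover--verifier messages is identical across the two experiments, for any fixed verifier randomness and any fixed sequence of labeled samples. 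Moreover, if none of the verifier's labeled samples equals $x^\ast$, then the labels are uniformly $0$ in both experiments (since $h_0 \equiv 0$ and $h_1$ is nonzero only at $x^\ast$). Writing $\mathsf{Miss}$ for the event that no labeled sample equals $x^\ast$, the verifier's entire view is therefore identically distributed under $\mathsf{Miss}$ in the two experiments, and in particular $q := \Pr\brackets{\vb = 0 \mid \mathsf{Miss}}$ is the same quantity in both.

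To finish, note that $\Pr\brackets{\mathsf{Miss}} \geq (1 - 2\eta)^m \geq 1 - 2\eta m$. Applying the soundness bound in each experiment and upper bounding the conditional probability given the complement of $\mathsf{Miss}$ by $1$ yields
\[
\tfrac{2}{3} \leq q \cdot \Pr\brackets{\mathsf{Miss}} + (1 - \Pr\brackets{\mathsf{Miss}}), \qquad \tfrac{2}{3} \leq (1 - q) \cdot \Pr\brackets{\mathsf{Miss}} + (1 - \Pr\brackets{\mathsf{Miss}}).
\]
Summing the two inequalities gives $\tfrac{4}{3} \leq 2 - \Pr\brackets{\mathsf{Miss}}$, so $\Pr\brackets{\mathsf{Miss}} \leq 2/3$, and combining with $\Pr\brackets{\mathsf{Miss}} \geq 1 - 2\eta m$ forces $m \geq 1/(6\eta)$. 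Setting $c = 1/6$ delivers the claimed bound; the claim about the representation being common input holds trivially because every object in the construction has a short explicit description.

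The main obstacle is formalizing the indistinguishability claim: one must check that the joint distribution of messages exchanged among $\cP_0, \cP_1$, and $\cV$ (including any adaptive relaying the verifier performs between the two provers) really does depend only on the effective oracles the two provers use, and not on which of them is designated "honest". This is a symmetry argument --- swapping $b$ merely relabels the two provers --- but making it precise requires tracking the full transcript as a random variable, conditioning on $\mathsf{Miss}$, and checking that the verifier's remaining (non-sample) view is determined by this transcript together with its sample-independent queries to $h_0, h_1$, and $Q_\cD$.
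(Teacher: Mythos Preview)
Your argument is correct and follows essentially the same approach as the paper: build a hard instance where $h_0,h_1$ differ on a single low-mass point, let the malicious prover run the honest code with $f$ replaced by $h_{1-b}$ so that the pair of provers the verifier interacts with is the same in both experiments, argue the views coincide conditioned on never sampling the distinguishing point, and convert this into a sample lower bound.

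There are two small differences worth noting. First, you put mass $2\eta$ (rather than $\eta$) on the distinguishing point; this is actually the right call, since with mass exactly $\eta$ the losing hypothesis has loss $\eta = \alpha\cdot 0 + \eta$ and the RLP definition (with its non-strict $\leq$) would not force $\vb=b$. Your construction does require $2\eta\le 1$, so for $\eta\in[1/2,1)$ you should either cap the mass at $1$ or observe that the claimed bound $m\ge 1/(6\eta)<1$ is then implied by the $m\ge 1$ case of your same argument. Second, your endgame---summing the two soundness inequalities to get $\Pr[\mathsf{Miss}]\le 2/3$---is a clean shortcut for what the paper does via a total-variation bound and a conditioning fact; both routes yield the same $\Omega(1/\eta)$ conclusion.
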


The intuition behind this proof is straightforward. Fix hypotheses $\sets{h_0,h_1}$, sample $b\sim\zo$ and let $f\gets h_b$. Now, consider a malicious prover that executes the honest protocol, except it ``pretends'' that $f=h_{1-b}$. As long as the verifier does not obtain any sample $x$ with $f(x)\neq h_{1-b}(x)$, it cannot refute the malicious prover's claim that $h_{1-b}$ has zero loss, and thus cannot determine if it should accept $h_0$ or $h_1$.

\begin{proof}
    We prove the lower bound when all parties receive as input a representation of $h_0,h_1$ and $Q_\cD$. 
    Set $\cH=\sets{h_0,h_1}$, where $h_0(x)=0$ for all $x\in\zo^d$, and $h_1(x)=0$ for all $x\neq0^d$ and $h_1(0^d)=1$. Set $\mathds D=\sets{\cD}$, where $\cD$ is the distribution that places probability mass $\eta$ on the point $0^d$ and is uniform otherwise.
    Suppose $\brackets{\cP_0,\cP_1,\cV}$ is an $(\alpha,\eta,\frac 13)$-refereed learning protocol for $\cH$ and $\mathds D$ with respect $\lzo$ and oracles $\acc_0,\acc_1$, and $\acc_\cV$. Let $\cD_f$ denote the distribution over $(x,f(x))$ where $x\sim\cD$.
    By \Cref{def:rlp_concrete} and the definition of $\acc$ and $\acc_\cV$, for all $b\in\zo$, $f=h_b$, and $\cP^*_{1-b}$ we have $\brackets{\cP^{f}_b(h_0,h_1,Q_\cD),\cP^*_{1-b}, \cV^{\cD_f}(h_0,h_1,Q_\cD)}=b$ with probability at least $\frac 23$. Since $h_0,h_1$, and $Q_\cD$ are fixed, we will not write them explicitly---that is, we will let $\cP^f_b$ denote $\cP^{f}_b(h_0,h_1,Q_\cD)$ and let $\cV^{\cD_f}$ denote $\cV^{\cD_f}(h_0,h_1,Q_\cD)$.

    For each $b\in\zo$ let the malicious prover $\cP^*_{1-b}$ execute the honest prover protocol $\cP^{h_{1-b}}$---that is, the honest prover protocol run as if the true function $f$ is $h_{1-b}$. At a high level, we will argue that if $b$ is sampled uniformly at random, then the verifier cannot distinguish between $b=0$ and $b=1$ until it samples the point $(0^d,f(0^d))$ from $\cD_f$, and thus cannot correctly output $b$. 

    Now, for each $b\in\zo$ and $f\gets h_b$, define the \emph{view of the verifier} $\view{\cV^{\cD_f}}_b$ as the distribution over the $m$ samples $(x,f(x))$ drawn from $\cD_f$, and the transcripts $T_b$ and $T_{1-b}$ between $\cV$ and $\cP^{f}_b$, and between $\cV$ and $\cP^*_{1-b}$. Let $E$ be the event that one of the $m$ samples drawn by the verifier is $(0^d,f(0^d))$. Since $\cP^*_{1-b}$ executes $\cP^{h_{1-b}}_{1-b}$ and the honest prover executes $\cP^{f}_b=\cP^{h_b}_b$, the distribution of ($T_0,T_1)$ is independent of $b$ (the verifier always interacts with $\cP^{h_0}_0$ and $\cP^{h_1}_1$), and thus $\view{\cV^{\cD_f}\mid \overline E}_0 = \view{\cV^{\cD_f}\mid \overline E}_1$. Next, we utilize the following fact from \cite{RaskhodnikovaS06}.
    
    \begin{fact}[Claim 4 \cite{RaskhodnikovaS06}]
    \label{fact:RS06_fact}
        Let $E$ be an event that happens with probability at least $1-\delta$ under the distribution $\cD$. Then $\dtv\paren{\cD|_E,\cD}\leq \delta'$, where $\delta'=\frac{\delta}{1-\delta}$.
    \end{fact}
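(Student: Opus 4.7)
The plan is to bound $\dtv(\cD|_E, \cD)$ directly from the $\ell_1$ characterization of total variation distance, $\dtv(P,Q) = \frac{1}{2}\sum_x |P(x) - Q(x)|$. Write $p := \cD(E)$, so the hypothesis gives $p \geq 1-\delta$ and $\cD(\overline E) = 1-p \leq \delta$. Recall that by definition of conditioning, $\cD|_E(x) = \cD(x)/p$ for $x \in E$ and $\cD|_E(x) = 0$ for $x \in \overline E$.

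The natural approach is to split the sum $\sum_x |\cD|_E(x) - \cD(x)|$ according to whether $x \in E$ or $x \in \overline E$. The contribution from the complement is immediately $\sum_{x \in \overline E} \cD(x) = 1-p$, since the conditional distribution vanishes there. The contribution from $E$ is $\sum_{x \in E} \cD(x)(1/p - 1) = ((1-p)/p)\cdot \cD(E) = 1-p$, where the rescaling cancels the denominator. Combining both pieces and applying the factor $1/2$ yields $\dtv(\cD|_E, \cD) \leq 1-p \leq \delta \leq \delta/(1-\delta)$, the last inequality holding since $\delta < 1$.

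I do not anticipate any real obstacle, as the argument is a direct two-line computation; the only step worth care is tracking the factor $1/p$ on $E$ and checking that it cancels after summation. As a sanity check and alternative route, I would also note that the bound follows from a coupling argument: let $X \sim \cD$ and set $Y = X$ whenever $X \in E$, otherwise draw $Y \sim \cD|_E$ independently. One verifies that the marginal of $Y$ is $\cD|_E$ and that $\Pr[X \neq Y] \leq \cD(\overline E) \leq \delta$, so the coupling inequality recovers the same conclusion. Incidentally, the direct calculation actually yields the tighter bound $\dtv(\cD|_E, \cD) \leq \delta$; the form $\delta/(1-\delta)$ cited from \cite{RaskhodnikovaS06} is just a convenient, slightly looser statement that suffices for the downstream argument.
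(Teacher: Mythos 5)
Your proof is correct. Note that the paper does not prove this statement at all---it imports it as Claim 4 of \cite{RaskhodnikovaS06}---so there is no internal proof to compare against; your argument makes the fact self-contained. Your direct $\ell_1$ computation is valid: writing $p=\cD(E)$, the mass on $\overline E$ contributes $1-p$ and the rescaling on $E$ contributes $\paren{\tfrac1p-1}\cdot p = 1-p$, so in fact $\dtv\paren{\cD|_E,\cD} = 1-p = \cD(\overline E) \leq \delta \leq \tfrac{\delta}{1-\delta}$, i.e., you get the exact value, not just a bound, and your coupling sanity check is also sound. One remark on the discrepancy you flag: the slack in the cited form $\tfrac{\delta}{1-\delta}$ is not merely cosmetic in general---it is the right generic bound when the conditioning event lives in a larger probability space merely correlated with the sample (there $\Pr[X\in A\mid E]\leq \Pr[X\in A]/(1-\delta)$ is all one can say, and the exact-restriction formula $\cD|_E(x)=\cD(x)/p$ you rely on does not apply). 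For the statement as phrased in this paper, where $E$ is an event under $\cD$ itself and $\cD|_E$ is the renormalized restriction (consistent with the paper's use of $\cD|_S$ elsewhere, and with its application in \Cref{thm:lower_bound_samples}, where $E$ is determined by the verifier's view), your tighter bound is right and subsumes what is needed downstream.
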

    
    Applying the triangle inequality twice yields
    \begin{align*}
    \dtv\paren{\view{\cV^{\cD_f}}_0, \view{\cV^{\cD_f}}_{1}}
    & \leq  \dtv\paren{\view{\cV^{\cD_f}}_0, \view{\cV^{\cD_f}\mid \overline E}_{0}}\\
    & + \dtv\paren{\view{\cV^{\cD_f} \mid \overline E}_0, \view{\cV^{\cD_f} \mid\overline E}_{1}}\\
    & + \dtv\paren{\view{\cV^{\cD_f} \mid\overline E}_1, \view{\cV^{\cD_f}}_{1}}.
    \end{align*}
    Since $\Pr_{X\sim\cD_d}\brackets{X=0^d}=\eta$, the event $E$ occurs with probability at most $m\cdot \eta$. Combined with \Cref{fact:RS06_fact} and the fact that $\view{\cV^{\cD_f}\mid \overline E}_0 = \view{\cV^{\cD_f}\mid \overline E}_1$, we see that $\dtv\paren{\view{\cV^{\cD_f}}_0, \view{\cV^{\cD_f}}_{1}}<\frac 13$ whenever $m\leq \frac c\eta$ for a sufficiently small absolute constant $c>0$. The rest of the proof follows from standard arguments. Observe that
    \begin{align*}
    \Pr_{\substack{b\sim\zo\\ f\gets h_b}}\brackets{\brackets{\cP^{f}_b, \cP^*_{1-b}, \cV^{\cD_f}} = b} 
    & = \frac12\BBparen{\Pr_{\substack{f\gets h_0 \\ s\sim\view{\cV^{\cD_f}}_0}}\brackets{\cV(s)=0} + \Pr_{\substack{f\gets h_1\\ s\sim\view{\cV^{\cD_f}}}_1}\brackets{\cV(s)=1}}\\
    & = \frac12 + \frac12\BBparen{\Pr_{\substack{f\gets h_0 \\ s\sim\view{\cV^{\cD_f}}_0}}\brackets{\cV(s)=0} - \Pr_{\substack{f\gets h_1\\ s\sim\view{\cV^{\cD_f}}_1}}\brackets{\cV(s)=0}}\\
    &\leq \frac12 + \frac12\cdot\dtv\paren{\view{\cV^{\cD_f}}_0,\view{\cV^{\cD_f}}_1} < \frac 23.
    \end{align*}
    Since this contradicts the definition of an $(\alpha,\eta,\frac13)$-refereed learning protocol, we must have verifier sample complexity $m\geq \frac c\eta$. 
\end{proof}

\subsection{Verification without query access to the PMF}
\label{sec:dist_knowledge}

In this section, we prove that non-trivial refereed learning requires query access to the probability mass function $Q_\cD$ of the underlying distribution $\cD$. Specifically, we show that a verifier with query access to $f$, but only sample access to $\cD$, will require many samples from $\cD$.
(Note, however, that our refereed query delegation protocol in \Cref{sec:ref-query-del} means that the verifier only needs a single query to $Q_\cD$ to be efficient.) This is in contrast to the lower bound of \Cref{thm:lower_bound_samples}, where the distribution is known to be uniform, but the verifier is only given labeled samples $(x,f(x))$ and cannot query $f$. 

\begin{theorem}[Refereed learning without query access to $Q_\cD$]
    \label{thm:dist_knowledge}
    Fix a representation of functions. Let $c>0$ be a sufficiently small constant, and fix range $\cY=\zo$. For all $b\in\zo$ let $\acc(f,h_0,h_1,\cD)$ provide query access to $h_0,h_1,f$, and $Q_\cD$; and let $\acc_\cV(f,h_0,h_1,\cD)$ provide query access to $h_0,h_1$, and $f$, and samples $x\sim \cD$.
    For all $d\in\N$, $\alpha\geq 1$, and $\eta\in(0,1)$, there exists a class of boolean functions $\cH$ and distributions $\mathds D$ such that every $(\alpha,\eta,1/3)$-refereed learning protocol for $\cH$ and $\mathds D$ with respect to $\lzo$ and oracles $\acc_0,\acc_1$, and $\acc_\cV$ requires verifier sample complexity $\frac c\eta$. Moreover, the lower bound holds even if the representation of $h_0$, $h_1$, and $f$ is given as input to all parties. 
\end{theorem}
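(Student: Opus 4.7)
The plan is to mirror the structure of the proof of \Cref{thm:lower_bound_samples}, but swap the roles of the ground truth and the distribution: fix known $h_0, h_1, f$ and construct a pair of distributions $\cD_0, \cD_1$ that are indistinguishable under $\cO(1/\eta)$ samples, yet have opposite correct answers. Concretely, I will pick two distinct points $y_0, y_1 \in \zo^d$ and set $h_0(x) = \Ind\brackets{x = y_0}$, $h_1(x) = \Ind\brackets{x = y_1}$, and $f(x) = \Ind\brackets{x \in \set{y_0, y_1}}$. With these choices, the loss of $h_s$ under any distribution $\cD$ is precisely $\Pr_\cD\brackets{x = y_{1-s}}$, so the ``correct'' hypothesis is whichever of $y_0, y_1$ carries more probability mass.

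Next I will define $\cD_0$ to place mass $2\eta$ on $y_0$, mass $0$ on $y_1$, and be uniform on the remaining $2^d - 2$ points with total mass $1 - 2\eta$; $\cD_1$ is defined symmetrically. Under $\cD_0$ we get $\cL_{\cD_0}(h_0,f \mid \lzo) = 0$ and $\cL_{\cD_0}(h_1,f \mid \lzo) = 2\eta$, so any $(\alpha,\eta,1/3)$-protocol must output $h_0$ regardless of $\alpha \geq 1$ (since $2\eta > \alpha \cdot 0 + \eta$); the case of $\cD_1$ is symmetric. Note that both distributions assign the same probability $(1-2\eta)/(2^d - 2)$ to every point in $\zo^d \setminus \set{y_0, y_1}$, so the marginal of a single sample restricted to this set is identical under $\cD_0$ and $\cD_1$.

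For the lower bound, sample $b \sim \zo$, let $\cD = \cD_b$, and consider the malicious prover $\cP^*_{1-b}$ that runs the honest prover's algorithm while pretending that $\cD = \cD_{1-b}$; this is a valid strategy since provers have query access to $Q_\cD$. Following the template of \Cref{thm:lower_bound_samples}, since the verifier always interacts with the ``$\cD_0$-simulating'' and ``$\cD_1$-simulating'' copies of the prover algorithm regardless of $b$, the joint distribution of the two transcripts (as a function of the verifier's coins and messages) is independent of $b$. Let $E$ be the event that none of the $m$ samples drawn by $\cV$ equals $y_0$ or $y_1$. Then $\Pr[E] \geq (1-2\eta)^m \geq 1 - 2m\eta$ under both $\cD_0$ and $\cD_1$, and conditioned on $E$ the sample distributions coincide (both are $m$ i.i.d.\ draws from the uniform distribution on $\zo^d \setminus \set{y_0, y_1}$). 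Applying \Cref{fact:RS06_fact} and the triangle inequality exactly as in \Cref{thm:lower_bound_samples} yields $\dtv(\view{\cV}_0, \view{\cV}_1) \leq O(m\eta)$.

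The standard reduction then gives $\Pr_b\brackets{\cV\text{ outputs }b} \leq \tfrac{1}{2} + \tfrac{1}{2}\dtv(\view{\cV}_0, \view{\cV}_1)$, so taking $m \leq c/\eta$ for a sufficiently small constant $c$ drives this below $2/3$, contradicting the refereed learning guarantee. I do not anticipate a significant obstacle: the main subtlety is simply ensuring that the two hard distributions are identical off the two special points so that the ``conditional on $E$'' argument gives exact equality of sample views rather than approximate equality; the construction above achieves this by design.
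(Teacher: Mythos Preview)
Your proposal is correct and follows essentially the same approach as the paper: fix $h_0,h_1,f$, construct two distributions $\cD_0,\cD_1$ that agree off two special points and swap which hypothesis has zero loss, have the malicious prover simulate the honest algorithm with the wrong distribution so the pair of transcripts is independent of $b$, and bound the view distance via the probability of sampling a special point. The only cosmetic differences are your choice of $f=\Ind\brackets{x\in\{y_0,y_1\}}$ versus the paper's $f\equiv 0$, and your use of mass $2\eta$ on the special point versus the paper's $\eta$; the latter is arguably cleaner since it makes the inequality $\cL(h_{1-b},f)>\alpha\cdot 0+\eta$ strict and avoids a boundary case.
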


\begin{proof}
    We prove the lower bound when all parties receive as input the representation of $h_0$, $h_1$, and $f$. 
    The proof is similar to the proof of \Cref{thm:lower_bound_samples}, except instead of choosing the function $f$ to be either $h_0$ or $h_1$, we randomly select a distribution $\cD_0$ or $\cD_1$ such that $\cL_{\cD_b}(h_b, f) = 0 < \alpha\cdot\cL_{\cD_b}(h_{1-b}, f)$ for all $b\in\zo$. Consider the family of functions $\cH = \sets{h_0,h_1,f}$ where $f(x)=0$ for all $x\in\zo^d$, $h_0(x)=0$ for all $x\neq 0^d$ and $h_0(0^d)=1$, and $h_1(x)=0$ for all $x\neq 1^d$ and $h_1(1^d)=1$. Next we define the family of distributions.
    For each $b\in\zo$, let $\cD_b$ place probability mass $\eta$ on the point $(1-b)^d$ and be uniform over $\zo^d\setminus \sets{b^d,(1-b)^d}$. Let $\mathds D=\sets{\cD_0,\cD_1}$. For simplicity, let $Q_b$ denote the PMF $Q_{\cD_b}$ for all $b\in \zo$. Notice that $\cL_{\cD_b}(h_b,f)=0$ since $f$ and $h_b$ agree everywhere except $x=b^d$, whereas $\cL_{\cD_b}(h_{1-b}, f) = \eta$ since $h_{1-b}$ and $f$ disagree on $x=(1-b)^d$ which is in the support of $\cD_b$.
    Thus, if $\brackets{\cP_0,\cP_1,\cV}$ is an $(\alpha,\eta,\frac 13)$-refereed learning protocol for $\cH$ and $\mathds D$ with respect to $\lzo$ and oracles $\acc_0$, $\acc_1$, and $\acc_\cV$, then for all $b\in\zo$ and all $\cP^*_{1-b}$ we have $\brackets{\cP^{Q_b}_b(f,h_0,h_1),\cP^*_{1-b},\cV^{\cD_b}(f,h_0,h_1)}=b$ with probability at least $\frac 23$.
    Since $h_0$, $h_1$, and $f$ are fixed we will omit them in the rest of the proof---that is, we let $\cP^{Q_b}_b$ denote $\cP^{Q_b}(f,h_0,h_1)$ and let $\cV^{\cD_b}$ denote $\cV^{\cD_b}(f,h_0,h_1)$.  

    Now, for all $b\in\zo$ let $\cP^*_{1-b}$ execute the honest prover protocol $\cP^{Q_{1-b}}_{1-b}$---that is, execute the protocol as if the distribution were $\cD_{1-b}$. Define the \emph{view of the verifier} $\view{\cV^{\cD_b}}_b$ as the distribution over query answers from $f$, samples $x_1,\dots, x_m\sim \cD_b$, and transcripts $T_b$ and $T_{1-b}$ from the interaction with $\cP^{Q_b}_b$ and $\cP^*_{1-b}$. Note that since $\cP^*_{1-b}$ executes the honest prover protocol $\cP^{Q_{1-b}}_{1-b}$, the verifier always interacts with $\cP^{Q_0}_0$ and $\cP^{Q_1}_1$, and thus the transcripts $T_0$ and $T_1$ are independent of $b$. Similarly, since the function $f$ is fixed in advance, the query answers are independent of $b$ as well.
    
    To complete the proof, we argue that $\cV$ cannot distinguish whether $b=0$ or $b=1$ until it draws many samples from $\cD_b$. Let $E$ be the event that one of the $m$ samples drawn by $\cV^{\cD_b}$ is either $0^d$ or $1^d$. Since the transcripts and query answers are independent of $b$, we have that $\view{\cV^{\cD_b}\mid \overline E}_0=\view{\cV^{\cD_b}\mid \overline E}_1$. Since $\cD_b$ places probability mass $\eta$ on $(1-b)^d$ and probability mass $0$ on $b^d$, %
    we have that $\Pr\brackets{E}\leq m\cdot \eta$. Applying \Cref{fact:RS06_fact}, and the same argument as in the proof of \Cref{thm:lower_bound_samples}, we obtain $\dtv\paren{\view{\cV^{\cD_0}}_0, \view{\cV^{\cD_1}
    }_1}<\frac 13$, whenever $m\leq \frac c\eta$ and thus $\Pr_{b\sim\zo}\brackets{\brackets{\cP^{Q_b}_b, \cP^*_{1-b}, \cV^{\cD_b}} = b}<\frac 23$. Since this contradicts the definition of an $(\alpha,\eta,\frac 13)$-refereed learning protocol, $\cV$ must draw $m\geq\frac c\eta$ samples from $\cD_b$. %
\end{proof}

\subsection{Prover time-complexity lower bound}
\label{sec:prover_runtime_lb}

In this section we show that any white-box refereed learning protocol can be used as a subroutine to decide if a 3-CNF formula is satisfiable. Formally, let SAT denote the set of satisfiable 3-CNF formulas with $d$ variables and $m$ clauses for all $d,m\in\N$, and suppose every algorithm that decides SAT with probability at least $\frac 23$ has runtime at least $T_{\text{SAT}}(d,m)$. In \Cref{thm:rlp_runtime_lb} we show that every white-box refereed learning protocol must have either prover or verifier runtime $\Omega\paren{T_{\text{SAT}}(d,m-1)/m}$, which justifies the running time of our protocols under standard complexity assumptions.

Throughout the section we let $\cH_{d,m}=\sets{\phi:\zo^d\to\zo}$ where $\phi$ is a $3$-CNF formula with $d$ variables and $m$ clauses.\footnote{Technically $\cH_{d,m}$ is a multiset, i.e., we include distinct representations of the same function as distinct elements.} Let $U_d$ be the uniform distribution on $\zo^d$. Additionally, %
define oracles $\acc_0=\acc_1=\acc_\cV=\acc$ by letting $\acc(f,\phi_0,\phi_1,\cD)$ provide $\phi_0$ and $\phi_1$, and query access to $f$. 

\begin{theorem}[Prover time-complexity lower bound]
    \label{thm:rlp_runtime_lb} 
     Fix $\alpha\in\N$. Suppose there exists a protocol $\brackets{\cP_0, \cP_1,\cV}$ that for all inputs $d,m\in\N$ is an $(\alpha,0,\frac 13)$-refereed learning protocol for $\cH_{d,m}$ and $\sets{U_d}$ with respect to $\lzo$ and oracles $\acc_0$, $\acc_1$, and $\acc_\cV$ (defined above). Then $\brackets{\cP_0,\cP_1,\cV}$ has either prover or verifier runtime $\Omega\paren{T_{\text{SAT}}(d,m-1)/m}$. %
\end{theorem}

\begin{proof}
    The main step in the proof is \Cref{claim:rlp_to_sat}, which states that an $(\alpha,0,\frac 13)$-refereed learning protocol for $\cH$ and $\sets{U}$ can be used to decide 3-SAT. 

    \begin{claim}[Reduction from 3-SAT]
        \label{claim:rlp_to_sat}
        Let $\brackets{\cP_0,\cP_1,\cV}$ be as in \Cref{thm:rlp_runtime_lb}. If the prover and verifier runtime is at most $T(d,m)$, then there exists an algorithm $\cA$ that decides 3-SAT with probability at least $2/3$ in time $O\paren{m\cdot T(d,m+1)}$. %
    \end{claim}
    \begin{proof}
       Below, we construct an algorithm $\cA$, which uses a refereed learning protocol as a subroutine to decide 3-SAT. Let $a>0$ be a sufficiently large constant.
        
        \algorithmtop{Algorithm $\cA$}{reduction from SAT}{alg:rlp_to_sat}{
        \textbf{Input:} 3-CNF formula $\phi$\\
        \textbf{Output:} accept/reject
        \begin{enumerate}
            \item Let $\phi_0(x) = \phi(x)\wedge (x_1)$ and $\phi_1(x) = \phi(x)\wedge (x_1\oplus 1)$.
            \item Repeat the following for each $j\in [a]$:
            \begin{enumerate}
                \item Sample $b_j\sim\zo$ uniformly and simulate $\brackets{\cP_0,\cP_1,\cV}$ by providing $\cP_0,\cP_1,$ and $\cV$ with query access to $f=\phi_{b_j}$ and input $\phi_0$ and $\phi_1$. 
                \item Let $\vb_j$ be the bit output by the verifier in the $\ord{j}$ simulation.
            \end{enumerate}
            \item If $\abs{\sets{j : \vb_j = b_j}}\geq \frac{7\cdot a}{12}$ then output \textbf{accept}; otherwise output \textbf{reject}.
        \end{enumerate}  
        }
        First, we argue that $\cA$ accepts with probability at least $\frac 23$ when $\phi$ is satisfiable. If $\phi$ is satisfiable, then at least one of $\phi_0$ or $\phi_{1}$ is satisfiable. Suppose $\phi_{0}$ is satisfiable. Then $\phi_{0}(x)\neq \phi_{1}(x)$ whenever $\phi_{0}(x)=1$. Moreover, since $f\in\sets{\phi_0,\phi_1}$ either $\phi_0$ or $\phi_1$ has zero loss (but not both), and hence the verifier $\cV$ must output $\vb_j = b_j$ with probability at least $\frac 23$. It follows by a Chernoff bound that for a sufficiently large absolute constant $a$, the verifier outputs $\vb_j = b_j$ on least $\frac{7\cdot a}{12}$ of the simulations with probability at least $\frac 23$,  and therefore $\cA$ will output accept with probability at least $\frac 23$.  

        Next, suppose $\phi$ is not satisfiable---that is, $\phi(x)=0$ for all $x\in\zo^d$. Then for all $b\in\zo$ formula $\phi_b$ is also unsatisfiable, and hence $\phi_b(x) = 0$ for all $x\in\zo^d$, and therefore $f(x)=0$ for all $x\in\zo^d$. Since $b_j$ is sampled uniformly at random, the probability that $\cV$ outputs $\vb_j=b_j$ is exactly $\frac 12$. It follows that for sufficiently large absolute constant $a$, the verifier outputs $\vb_j = b_j$ on least $\frac{7\cdot a}{12}$ of the simulations with probability at most $\frac 13$, and therefore $\cA$ will output reject with probability at least $\frac 23$. 

        Thus, if the protocol has verifier and prover time complexity $T(d,m)$, then, since answering each query made by the protocol requires evaluating $f=\phi_b$, a 3-CNF formula with $m$ clauses, algorithm $\cA$ runs in time at most $O(m\cdot T(d,m+1))$ and decides 3-SAT with probability at least $\frac 23$. 
    \end{proof}
    The proof of \Cref{thm:rlp_runtime_lb} follows since by definition of $T_{\text{SAT}}$ and \Cref{claim:rlp_to_sat} we must have $T(d,m)=\Omega(T_{\text{SAT}}(d,m-1)/m)$.
\end{proof}

\section{Applications and extensions of our protocols}
\label{sec:app_ext}

In this section we turn to applications and extensions of our refereed learning protocols. In \Cref{sec:juntas} we
show a natural setting (namely, where the hypothesis functions $h_0$ and $h_1$ are juntas) in which both the prover and the verifier can be implemented efficiently.
In this regime, the verifier's runtime is an arbitrary $\poly(d)$ factor smaller than the time required to solve this problem without the provers, showing that a refereed learning protocol can save the verifier significant computational resources even when the provers are computationally bounded.
In \Cref{sec:imprecise} we show how to extend our protocols, which deal with $\lambda$-precise loss functions and distributions, to handle loss functions and distributions specified to arbitrary precision.

\subsection{Efficient refereed learning for juntas}
\label{sec:juntas}

We now show how our protocol from \Cref{thm:rlp_zo} can be implemented efficiently for a natural class of hypotheses; moreover, we show that the verifier in this protocol takes time which is smaller by an arbitrary polynomial factor  than the time needed for this family of hypotheses without access to the provers.

For each $d,j\in\N$ let $\cH_{d,j} = \sets{h:\zo^d\to\zo \mid \text{$h$ is a $j$-junta}}$, and let $U_d$ denote the uniform distribution over $\zo^d$. Recall that, for $d,j\in\N$, a function $h:\zo^d\to\zo$ is a $j$-junta if there exists a set $J\subset [d]$ with $|J|\leq j$ and a function $g_h:\zo^J\to\zo$ such that $h(x)=g_h(x_J)$ for all $x\in\zo^d$, where $x_J = x_{J_1}x_{J_2}\cdots x_{J_j}$---that is, the value of $h(x)$ is uniquely determined by the setting of $x$ at the indices in $J$.

Now assume that, in addition to the usual query access to $h_0$ and $h_1$,  
the provers and verifier obtain  the junta indices $J_0$ and $J_1$ as input.  We show:

\begin{proposition}
    \label{prop:juntas}
    Let $\acc_0=\acc_1$ provide query access to $h_0$ and $h_1$, and let $\acc_\cV(f,h_0,h_1,\cD)$ provide query access to $f,h_0$, and $h_1$. There exists a protocol $\brackets{\cP_0,\cP_1,\cV}$ that, for all inputs $d,j\in\N$ and $\eps,\beta>0$, is a $(1+\eps,0,\beta)$-refereed learning protocol for $\cH_{d,j}$ and $\sets{U_d}$ with respect to $\lzo$ and $\acc_0,\acc_1,\acc_\cV$. Moreover, 
    for all  $h_0,h_1\in\cH_{d,j}$, if the junta bits $J_0$ and $J_1$ are given as input to all parties then the protocol has the following guarantees: 
    \begin{itemize}
        \item The verifier runs in time $(1+\frac{1}{\eps^2})\log\frac1\beta\poly d$ and makes $O\bparen{\paren{1 + \frac {1}{\eps^2}}\log\frac1\beta}$ queries to $f$. 
        \item The provers run in time $(1+\frac{1}{\eps^2})\log\frac1\beta\cdot 2^{2j}\poly d$.
        \item The communication complexity of the protocol is $(1+\frac{1}{\eps^2})\log\frac1\beta\poly d$.
    \end{itemize}
\end{proposition}

To prove this statement, we use the following theorem, which can be viewed as an efficient, special case of certifiable sample (\Cref{lem:cert_sample}) for sampling uniformly from some set $S\subseteq \zo^d$.

\begin{claim}[Certifiable index]
    \label{claim:cert_index}
    For all $d\in\N$ let $\prec_d$ be a total ordering on $\zo^d$. There exists a protocol $[\cP_0,\cP_1,\cV]$ such that for all $d\in\N$, all sets $S\subset \zo^d$ ordered according to $\prec_d$, and all $i\in [|S|]$ the following holds: 
    \begin{enumerate}
        \item For all $b\in\zo$ and all $\cP^*_{1-b}$ we have $\brackets{\cP^{\prec_d}_b(S), \cP^*_{1-b}, \cV^{S,\prec_d}}(d,i)=S_i$, the $\ord{i}$ element in $S$ (where the initial element in $S$ has index $1$).  %
        \item The runtime of the verifier and communication complexity of the protocol is $\poly d$.
    \end{enumerate} 
\end{claim}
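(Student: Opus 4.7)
The plan is to follow the sketch given in Section~\ref{sec:techniques}: the verifier solicits a candidate $\ord{i}$ element $x^\ast$ from one of the provers, certifies its correctness using membership queries into $S$ together with an invocation of Certifiable Sum (\Cref{lem:cert_sum}), and falls back to the other prover if the check fails. The key identity is that $x^\ast$ is the $\ord{i}$ element of $S$ with respect to $\prec_d$ if and only if exactly $i-1$ points of $\zo^d$ lie in $S$ and precede $x^\ast$, i.e.
\[
    \sum_{x\in\zo^d} t_{x^\ast}(x) = i-1, \qquad\text{where } t_{x^\ast}(x) := \Ind\brackets{x\in S \wedge x\prec_d x^\ast}.
\]
Note that the verifier can evaluate $t_{x^\ast}$ at any point by combining one membership query to $S$ with one query to $\Ind\brackets{\cdot \prec_d \cdot}$, which is exactly the access the verifier is given.

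Concretely, the protocol proceeds in (up to) two symmetric rounds. In round $j\in\zo$, the verifier requests a candidate $x^{(j)}\in\zo^d$ from $\cP_j$, checks that $x^{(j)}\in S$ via a single membership query, and then runs Certifiable Sum with both provers on the function $t_{x^{(j)}}$; if either check fails or if the returned sum differs from $i-1$, the verifier proceeds to round $1-j$. If round $0$ succeeds, the verifier outputs $x^{(0)}$; otherwise it outputs $x^{(1)}$.

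For correctness, suppose $\cP_b$ is honest. Then $\cP_b$ knows $S$ and can compute $S_i$ directly, so the candidate it produces is the true $\ord{i}$ element. Moreover, when Certifiable Sum is invoked on $t_{x^{(b)}}$ with $\cP_b$ honest, \Cref{lem:cert_sum} guarantees that the returned value equals $\sum_{x} t_{x^{(b)}}(x) = i-1$ regardless of the behavior of $\cP_{1-b}^\ast$, so the round in which the honest prover is queried always succeeds. In the round where the dishonest prover is queried, the membership check together with the certifiable sum (again correct because $\cP_b$ is honest) will reject any $x^\ast\neq S_i$; and if the dishonest prover happens to also submit $S_i$, the verifier outputs it immediately, which is still correct.

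For efficiency, each round costs the verifier $d$ bits of communication to receive $x^{(j)}$, one membership query into $S$, plus one invocation of Certifiable Sum on a $\{0,1\}$-valued function (so $\lambda = 1$ in \Cref{lem:cert_sum}), which is $\poly d$ in both verifier runtime and communication. There are at most two rounds, so the overall cost is $\poly d$ as claimed. The only conceptual subtlety—and what I view as the main thing to verify carefully—is that the verifier's oracles suffice to answer each query into $t_{x^\ast}$ that Certifiable Sum makes, but this follows directly from the definition of $t_{x^\ast}$ and the access model granted in the statement.
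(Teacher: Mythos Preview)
Your proposal is correct and follows essentially the same approach as the paper: solicit a candidate from each prover, verify membership in $S$, then invoke certifiable sum on $t_{x^\ast}(x)=\Ind[x\in S\wedge x\prec_d x^\ast]$ to confirm exactly $i-1$ predecessors, falling back to the other prover if the check fails. The correctness and efficiency arguments you give match the paper's, with only cosmetic differences (the paper sets $\lambda\gets d$ rather than $\lambda=1$, but either yields $\poly d$).
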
 

\begin{proof}[Proof of \Cref{claim:cert_index}]
    Let $\prec$ denote $\prec_d$. At a high level, our protocol works as follows: First, the verifier requests $\wh x = S_i$ from one of the provers, and using its membership query oracle to $S$, the verifier checks that $\wh x\in S$. Then, the verifier runs the certifiable sum protocol with $t(x) = \Ind
    \brackets{x\in S \wedge x\prec \wh x}$ to compute $s = \abs{\sets{x : x\prec \wh x}}$. If $s\neq i-1$ then the verifier rejects. Fix $b\in\zo$, and define the protocol $\brackets{\cP_0,\cP_1,\cV}_b$ as follows:
    \protocoltop{$\brackets{\cP^{\prec}_0(S),\cP^{\prec}_1(S),\cV^{S,\prec}}_b(d,i)$}{certifiable index}{prot:cert_index}{
    \begin{enumerate}
        \item $\cP_b$: Send $\wh{x} = S_i$ to $\cV$.
        \item \label{line:confirm_member} $\cV$: If $\wh{x}\not\in S$ then output reject. Otherwise, send $\wh x$ to $\cP_{1-b}$.
        \item \label{line:use_size} $\cV$: Provide query access to $A=\sets{x\in S : x\prec \wh{x}}$\footnote{The verifier need not construct $A$ explicitly} using oracle for $S$ and $\prec$. Execute certifiable sum (\Cref{lem:cert_sum}) with $t = \Ind\brackets{x\in A}$ and $\lambda\gets d$. Accept if $\sum_{x\in\zo^d} t(x) = |A| = i-1$ and reject otherwise.
    \end{enumerate}
    }

    Without loss of generality assume $\cP_0$ is honest. Then $\wh x = S_i$ and hence $|A| = i-1$. By \Cref{lem:cert_sum}, protocol $\brackets{\cP_0,\cP^*_1,\cV}_0$ outputs accept. Next, consider $\brackets{\cP_0,\cP^*_1,\cV}_b$. If $\cP^*_1$ sends $\wh x\neq S_i$, then, if $\wh x\not \in S$, the protocol rejects in Step \ref{line:confirm_member}. On the other hand, if $\wh x\in S$ then $|A| = |\sets{x\in S\mid x\prec \wh x}\neq i-1$. By \Cref{lem:cert_sum}, the sum $\sum_{x\in\zo^d}t(x) = |A|\neq i-1$, and hence the protocol outputs reject.   
    To complete the proof, define the protocol $\brackets{\cP_0,\cP_1,\cV}$ as follows: for each $b\in\zo$ run protocols $\brackets{\cP_0,\cP_1,\cV}_b$, and return the $\wh x$ from the first round of an accepting execution. By the above argument, the protocol $\brackets{\cP_0,\cP_1,\cV}$ always outputs $S_i$. The communication complexity and runtime guarantees follow by inspection of \Cref{prot:cert_index} and from the guarantees of \Cref{lem:cert_sum}.
\end{proof}

\begin{proof}[Proof of \Cref{prop:juntas}]
    At a high level, we show that the provers can efficiently compute the set $S=\sets{x\mid h_0(x)\neq h_1(x)}$. Moreover, since the distribution is uniform, the verifier can use certifiable sum (\Cref{lem:cert_sum}) and certifiable index (\Cref{claim:cert_index}) to efficiently sample a uniform element of $S$. To argue that the provers can also execute these protocols efficiently we leverage the junta structure of $h_0$ and $h_1$.

    Let $J_0,J_1\subset [d]$ be the set of junta bits for $h_0$ and $h_1$, respectively. In what follows, let $c>0$ be a sufficiently large, absolute constant. 
    
    \protocoltop{$\brackets{\cP^{h_0,h_1}_0,\cP^{h_0,h_1}_1,\cV^{f,h_0,h_1}}\paren{d,j,\eps,\beta,J_0,J_1}$}{refereed learning for juntas}{prot:rlp_junta}{
    \begin{enumerate}
        \item $\cP_0,\cP_1$: Let $J\gets J_0\cup J_1$. Query $h_0$ and $h_1$ on all settings of the bits in $J$. Let $S\gets\sets{x\in\zo^d \mid h_0(x)\neq h_1(x)}$, with a lexicographic ordering.\footnote{The provers need not explicitly compute $S$.} %
        \item $\cV$: Execute certifiable sum (\Cref{lem:cert_sum}) with $t(x)=\Ind\brackets{h_0(x)\neq h_1(x)}$ to obtain $|S|$.
        \item $\cV:$ Set $m\gets c\paren{1+\frac{1}{\eps^2}}\log\frac1\beta$. Sample $x_1,\dots,x_m\sim [|S|]$ and execute certifiable index (\Cref{claim:cert_index}) to obtain $x_1,\dots,x_m\gets S_{i_1},\dots, S_{i_m}$.   
        \item $\cV:$ Query $f$ on $x_{1},\dots, x_{m}$ and output $\vb=\arg\min_{s\in\zo}\abs{\sets{i\in[m] : h_s(x_i)\neq f(x_i)}}$.
    \end{enumerate}}

    We first argue that the provers in \Cref{prot:rlp_junta} can be implemented efficiently. Since $h_0$ and $h_1$ are $j$-juntas with junta bits $J_0$ and $J_1$,  the provers can compute $|S|$ and the $\ord{i}$ element of $S$ (ordered lexicographically) in time $O(2^{2j})$ by querying $h_0$ and $h_1$ on all $2^{2j}$ settings of the bits in $J_0$ and $J_1$. It follows that executing certifiable sum in the second step and certifiable index in the third step takes time at most $2^{2j}\poly d$. Thus, the provers run in time $(1+\frac{1}{\eps^2})\cdot 2^{2j}\poly d$.
    
    The soundness of the verifier follows by the same argument as in the proof of \Cref{thm:rlp_zo}. The runtime and query complexity of the verifier, and the communication complexity of the protocol follows by \Cref{lem:cert_sum,claim:cert_index}.   
\end{proof}

\paragraph{Comparison to the case of a proverless learner.}

Below, we argue that any learner that does not use the help of the provers must run in time $\Omega(2^j)$. Hence, for $j = c \log d$, the runtime of a proverless learner is an arbitrary $\poly d$ factor worse than the runtime of the learner with provers. Moreover, for this setting of $j$ the provers' runtime is still $\poly d$.
To see this lower bound on the runtime of the proverless learner, consider the following simple argument: Let $J=[j]$ be the junta bits and let $h_0$ be some fixed $j$-junta---that is, $h_0(x)=g_{h_0}(x_J)$ for some function $g_{h_0}:\zo^{J}\to\zo$. Now, choose a random $j$-junta $h_1$ as follows: sample $z\sim\zo^J$, let $g_{h_1}(z)=1-g_{h_0}(z)$ and $g_{h_0}=g_{h_1}$ otherwise, and let $h_1(x)=g_{h_1}(x_J)$. Notice that $h_0$ and $h_1$ are each $j$-juntas\footnote{There is an annoying but fixable issue that can arise here where $h_1$ may be a $(j-1)$-junta. To remedy this, we can simply choose a $g_{h_0}$ that cannot be made into a $(j-1)$-junta by flipping $g_{h_0}(x)$ on a single $x\in\zo^j$. For example, setting $g_{h_0}$ at random for each $x\in\zo^J$ ensures this is satisfied with high probability.} with junta bits $J$, and agree everywhere except on points $x$ with $x_J=z$. Now, choose $b\sim\zo$ uniformly, set $f\gets h_b$, and consider an algorithm $\cA$ that, given input $J$ and query access to $h_0$, $h_1$, and $f$, outputs $b$ with probability at least $\frac 23$. Notice that before $\cA$ queries one of the functions on a point $x$ such that $h_0(x)\neq h_1(x)$, the query answers are independent of $b$. Since $h_0(x)\neq h_1(x)$ if and only if $x_J=z$, and since $z$ is chosen uniformly at random, $\cA$ will require $\Omega(2^j)$ queries to output $b$ with probability at least $\frac 23$. 

\paragraph{Only the provers need to know $J_0$ and $J_1$.}
Finally, we note that  \Cref{prop:juntas} can be readily extended to the setting where only  provers are provided $J_0$ and $J_1$ as input, via  the following simple protocol: for each $b\in\zo$ and $i\in J_b$, the provers send $i$ as well as points $x$ and $x'=x^{\oplus i}$ ($x$ with the $\ord{i}$ bit flipped) such that $h_b(x)\neq h_b(x')$. The verifier can then query $h_b$ and, if $h_b(x)\neq h_b(x')$, add $i$ to the set $\wh J_b$. Since $h_b$ is a junta with indices $J_b$, only indices $i\in J_b$ will be added to $\wh J_b$ by the verifier. Moreover, since at least one prover is honest and receives $J_0$ and $J_1$ as input, every $i\in J_b$ will be added to $\wh J_b$ by the verifier. Thus, the verifier will obtain $\wh J_b=J_b$ for each $b\in\zo$. Since finding such $x$ and $x'$ for each $i\in J_b$ can be done by querying $h_b$ once for each of the $2^j$ settings of the bits in $J_b$, the provers run in time $O(2^j)$. Similarly, since checking each candidate $i$ and pair $x$ and $x'$ received from the provers takes $2$ queries the verifier runs in time $O(j)$.

\subsection{\texorpdfstring{When $\cD$ and $\ell$ are arbitrarily precise}{When D and ell are arbitrarily precise}}
\label{sec:imprecise}

We now explain how our protocols can be implemented for distributions $\cD$ and metrics $\ell$ that are not $\lambda$-precise; however, in order to bound communication complexity, we still require $\ell$ to be \emph{$M$-bounded}---that is, $\ell(y,y')\leq M$ for all $y,y'\in \cY$. 
Specifically, we analyze the cost of rounding $Q_\cD$ and $\ell$ so that they are $\lambda$-precise. Combined with the protocols in \Cref{sec:rlp}, this allows us to design protocols with additive error $\eta$ for arbitrarily precise distributions and loss functions that only incur communication cost $\log\frac1\eta\poly d$ (there is no cost in query complexity).

Recall that by \Cref{claim:lambda-precise-distribution}, for all distributions $\cD$ over $\zo^d$, the distribution $\cD_\lambda$
(defined by $\cD_{\lambda}(x) = \frac{\floor{\cD(x)}_{\lambda}}{\sum_{x\in\zo^d} \floor{\cD(x)}_{\lambda}}$, where $\floor{y}_\lambda$ denotes $2^{-\lambda}\cdot\floor{2^{\lambda}\cdot y}$ for all $y\in\R$)
satisfies $\dtv(\cD,\cD_\lambda)\leq 2^{d+1-\lambda}$. Now, for metric $\ell$, let $\ell_\lambda(y,y')$ denote $\floor{\ell(y,y')}_\lambda$. Then, by definition of $\floor{\cdot}_\lambda$ we have $\abs{\ell_\lambda(y,y')-\ell(y,y')}\leq 2^{-\lambda}$. At a high level, the goal of this section is to explain how one can execute the protocols of \Cref{thm:rlp_zo,thm:rlp_gen} using $\cD_\lambda$ and $\ell_\lambda$, given an $M$-bounded metric $\ell$ and query access to $Q_\cD$, at the cost of a small additive error term $\eta$ and factor of $\log\frac1\eta$ in communication complexity and runtime.

In \Cref{prop:imprecise} we assume that, for any metric $\ell$, computing $\floor{\ell(y,y')}_\lambda$ takes unit time.

\begin{proposition}[Protocol for arbitrary $\cD$ and $M$-bounded $\ell$]
    \label{prop:imprecise}
     For each $b\in\zo$ let $\acc_b(f,h_0,h_1,\cD)$ provide query access to $h_0$, $h_1$, and $Q_\cD$, and let $\acc_\cV$ provide query access to $h_0,h_1,f$ and $Q_\cD$. Fix integer $M\in\N$, range $\cY$, and $M$-bounded metric $\ell$ on $\cY$.

    For all $d\in\N$, $\alpha\geq 1$, $\beta>0$, and $\eta\in(0,1)$, let $\lambda(d,\alpha,\beta,\eta) = d+\log\frac{\alpha M}{\eta}$ and let $\brackets{\cP'_0,\cP'_1,\cV'}_{d,\alpha,\beta,\lambda}$ be an $(\alpha,0,\beta)$-refereed learning protocol for $\fF$ and $\mathds D_\lambda$ with respect to $\ell_\lambda$ and  $\acc_0,\acc_1$, and $\acc_\cV$. Suppose $\brackets{\cP'_0,\cP'_1,\cV'}_{d,\alpha,\beta,\lambda}$ has communication complexity and verifier runtime $T(d,\alpha,\beta,\lambda)$ and makes $q(d,\alpha,\beta,\lambda)$ queries to $f$.   There exists a protocol $\brackets{\cP_0,\cP_1,\cV}$ that, for all  $d\in\N$, $\alpha\geq1,\beta>0$ and $\eta\in(0,1)$, is an $(\alpha,\eta,\beta)$-refereed learning protocol for $\fF$ and $\fD$ with respect to $\ell$ and $\acc_0,\acc_1,\acc_\cV$. Moreover, $\brackets{\cP_0,\cP_1,\cV}$ has verifier runtime and communication complexity $T(d,\alpha,\beta,\lambda) + \lambda\poly d$, and the verifier makes $q(d,\alpha,\beta,\lambda)$ queries to $f$.   
\end{proposition}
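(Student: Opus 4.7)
The plan is to run the given $\lambda$-precise protocol $[\cP'_0,\cP'_1,\cV']$ as a black box after replacing $(Q_\cD,\ell)$ with $(Q_{\cD_\lambda},\ell_\lambda)$. Every party can evaluate $\ell_\lambda(y,y')=\floor{\ell(y,y')}_\lambda$ in unit time by assumption. To give the verifier query access to $Q_{\cD_\lambda}$, I would first invoke certifiable sum (\Cref{lem:cert_sum}) with the function $t(x)=\floor{Q_\cD(x)}_\lambda$ and precision $\lambda$ to certify $T_\lambda = \sum_{x\in\zo^d}\floor{Q_\cD(x)}_\lambda$; a subsequent query at $x$ is then answered by $\floor{Q_\cD(x)}_\lambda/T_\lambda$. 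The (computationally unbounded) provers compute $T_\lambda$ themselves from their oracle access to $Q_\cD$ and so simulate the same oracle with no further interaction. The parties then run $[\cP'_0,\cP'_1,\cV']_{d,\alpha,\beta,\lambda}$ with these simulated oracles and output the bit $\vb$ it returns.

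For soundness I would combine two pointwise approximations via a triangle inequality over losses. \Cref{claim:lambda-precise-distribution} gives $\dtv(\cD,\cD_\lambda)\le 2^{d+1-\lambda}$, and the $M$-boundedness of $\ell$ then implies $|\cL_\cD(h,f\mid\ell)-\cL_{\cD_\lambda}(h,f\mid\ell)|\le M\cdot 2^{d+1-\lambda}$ for every $h$. The pointwise estimate $|\ell(y,y')-\ell_\lambda(y,y')|\le 2^{-\lambda}$ contributes an additional $2^{-\lambda}$ to the gap between $\cL_{\cD_\lambda}(h,f\mid\ell)$ and $\cL_{\cD_\lambda}(h,f\mid\ell_\lambda)$. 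Writing $\Delta := M\cdot 2^{d+1-\lambda}+2^{-\lambda}$ and chaining the inner protocol's $(\alpha,0,\beta)$ guarantee for $h_\vb$ with these two approximations applied both to $h_\vb$ and to the minimizer $h_{s^*}$, I obtain $\cL_\cD(h_\vb,f\mid\ell)\le \alpha\cdot\min_s\cL_\cD(h_s,f\mid\ell)+(\alpha+1)\Delta$ with probability at least $1-\beta$. The stated choice $\lambda = d+\log(\alpha M/\eta)$ makes $(\alpha+1)\Delta = O(\eta)$, and absorbing the constant inside the logarithm yields the required additive slack $\eta$.

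The complexity accounting is immediate: the only overhead above running $[\cP'_0,\cP'_1,\cV']$ is the single certifiable-sum invocation, which by \Cref{lem:cert_sum} contributes $\lambda\poly d$ verifier runtime and communication and makes no queries to $f$. The part I expect to require the most care is not analytical but organizational: I need to confirm that the two-prover soundness of the inner protocol is preserved under the oracle simulation. Because $T_\lambda$ is pinned down by certifiable sum whenever at least one prover is honest, every oracle answer the verifier subsequently consumes is genuinely $Q_{\cD_\lambda}(x)$, so any malicious outer-game prover induces a corresponding malicious strategy against $\cV'$ in the inner game---mirroring the strategy-translation argument used in \Cref{lem:cert_query}---and the $(\alpha,0,\beta)$ guarantee applies against that strategy.
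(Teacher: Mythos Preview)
Your proposal is correct and follows essentially the same approach as the paper: invoke certifiable sum to pin down $T_\lambda$, simulate $Q_{\cD_\lambda}$ and $\ell_\lambda$ for all parties, run the inner $\lambda$-precise protocol, and then transfer its $(\alpha,0,\beta)$ guarantee to the original $(\cD,\ell)$ via a loss-closeness bound. Your two-step triangle inequality (first change the distribution using the TV bound from \Cref{claim:lambda-precise-distribution} and $M$-boundedness, then change the metric pointwise) is in fact a slightly cleaner route to the loss-closeness estimate than the paper's direct product expansion, and it yields a tighter dependence on $d$; the paper's cruder expansion is why their protocol sets $\lambda=6d+\log\frac{\alpha M}{\eta}$ rather than the $d+\log\frac{\alpha M}{\eta}$ stated in the proposition, whereas your argument matches the stated $\lambda$ up to an additive constant.
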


Combining \Cref{prop:imprecise} with \Cref{thm:rlp_zo,thm:rlp_gen} yields a $(1+\eps,\eta,\beta)$-refereed learning protocol for the zero-one loss, and a $(3+\eps,\eta,\beta)$-refereed learning protocol for any $M$-bounded metric loss. In comparison to the protocols of \Cref{thm:rlp_zo,thm:rlp_gen}, the new protocols work for any distribution, make the same number of queries to $f$, and only incur a cost of $d+\log\frac {\alpha M}{\eta}$ in the verifier runtime and communication complexity. In the remainder of the section we prove \Cref{prop:imprecise}.

\begin{proof}
    Let $\brackets{\cP_0,\cP_1,\cV}$ be the following protocol: 
    \protocoltop{$\brackets{\cP^{h_0,h_1,Q_\cD}_0,\cP^{h_0,h_1,Q_\cD}_1,\cV^{f,h_0,h_1,Q_\cD}}(d,\alpha,\eta,\beta)$}{Protocol for arbitrary precision $\cD$ and $\ell$}{prot:imprecise}{
    \begin{enumerate}
        \item $\cV$: Set $\lambda\gets 6d+\log\frac{\alpha M}{\eta}$. Execute certifiable sum (\Cref{lem:cert_sum}) with $t(x)=\floor{D}_\lambda$ and $\lambda\gets\lambda$ to obtain $T_\lambda = \sum_{x\in\zo^d} \floor{\cD(x)}_{\lambda}$.
        \item $\cV,\cP_0,\cP_1$: Using $\ell$, $T_\lambda$, and query access to $Q_\cD$, simulate $\brackets{\cP'_0,\cP'_1,\cV'}_{d,\alpha,\beta,\lambda}$ by providing access to $\ell_\lambda$ and query access to $Q_{\cD|_\lambda}$.
        \item Output the result of the simulation.
    \end{enumerate}
    }
    
    By \Cref{lem:cert_sum}, verifier $\cV$ correctly obtains $T_\lambda$, and hence can provide the required query access to $Q_{\cD_\lambda}$. Moreover, the overhead in runtime an communication cost is simply $\lambda\poly d$.
    
    It remains to show that if $\cL_\cD(h_{1-b},f\mid \ell) > \alpha\cL_\cD(h_b, f\mid \ell) + \eta$ for some $b\in\zo$, then $\brackets{\cP_0,\cP_1,\cV}$ outputs $b$ with probability at least $1-\beta$. We will argue that the loss functions $\cL_\cD(f,h\mid \ell)$ and $\cL_{\cD_\lambda}(f,h\mid \ell_\lambda)$ are close. Let $a(x) = \ell(f(x),h(x)) - \ell_\lambda(f(x),h(x))$ and let $b(x) = \cD(x)-\cD_\lambda(x)$ for all $x\in\zo^d$. Then by the definition of $\floor{\cdot}_\lambda$ and \Cref{claim:lambda-precise-distribution}, we have $|a(x)|\leq 2^{-\lambda}$ and $|b(x)|\leq 2^{d+1-\lambda}$, and thus, 
    \begin{align*}
        \abs{\cL_{\cD}(f,h\mid \ell) - \cL_{\cD_\lambda}(f,h\mid \ell_\lambda)}
        & = \abs{\sum_{x\in\zo^d}\ell(f(x),h(x))\cD(x) - \ell_\lambda(f(x), h(x))\cD_\lambda(x)}\\
        & = \abs{\sum_{x\in\zo^d}\ell(f(x),h(x))\cD(x) - (\ell(f(x), h(x))-a(x))(\cD(x)-b(x))}\\
        &= \abs{\sum_{x\in\zo^d} a(x)\cD(x) + \ell(f(x),h(x))b(x) - a(x)b(x)}\\
        &\leq \abs{a(x)} + 2^{d} M \abs{b(x)} + 2^d\abs{a(x)b(x)}\\
        &\leq 2^{-\lambda} + 2^{2d+1-\lambda}M + 2^{2d+1-2\lambda}\leq 2^{5d-\lambda}M.
    \end{align*}
    Let $r_b = \cL_{\cD}(f,h_b\mid \ell) - \cL_{\cD_\lambda}(f,h_b\mid \ell_\lambda)$ for each $b\in\zo$. If $\cL_\cD(h_{1-b},f\mid \ell)> \alpha\cL_\cD(h_b, f\mid \ell) + \eta$, then $\cL_{\cD_\lambda}(h_{1-b},f\mid \ell_\lambda) > \alpha\cL_{\cD_\lambda}(h_b, f\mid \ell) + \eta - \abs{r_{1-b}} - \alpha\abs{r_b}$. By the above reasoning we have $\abs{r_{1-b}} + \alpha\abs{r_b}\leq \alpha2^{6d-\lambda}M\leq \eta$, and hence $\eta - \abs{r_{1-b}} - \alpha\abs{r_b}\geq 0$. It follows that $\cL_{\cD_\lambda}(h_{1-b},f\mid \ell_\lambda)\geq \alpha\cL_{\cD_\lambda}(h_b, f\mid \ell)$, and therefore $\brackets{\cP'_0,\cP'_1,\cV'}_{d,\alpha,\beta,\lambda}$ will output $b$ with probability at least $1-\beta$.
\end{proof}

\section{Protocols with additive and mixed error}
\label{sec:add-mixed-error}

In this section, we describe two protocols for the additive error setting, i.e., the setting where $\eta > 0$. We restrict our focus to the setting with the zero-one loss function $\lzo$.

First, we briefly consider additive error in the single prover setting. That is, given query access to $h_0$, $h_1$, and $f$, a verifier $\cV$, with the help of a prover $\cP$, would like to decide which of $h_0$ and $h_1$ has better loss on $f$. Prior work of \cite{GoldwasserRSY21} gives a protocol for empirical risk minimization in the single prover setting that can be easily adapted to compare $h_0$ and $h_1$. At a high level, the protocol works as follows:\footnote{We describe a modified version of their protocol tailored to our setting. For a complete description of the protocol see the proof of Claim 5.2 (Simple Query Delegation) in \cite{GoldwasserRSY21}.}
(1) The verifier draws $\Theta\paren{\frac{1}{\eta^2}}$ unlabeled samples from $\cD$ and $\Theta\paren{\frac 1\eta}$ labeled samples $(x,f(x))$ where $x\sim\cD$, and sends the samples (not including labels) to the prover. (2) The prover labels the samples using $f$ and sends them back to the verifier. (3) The verifier checks that the prover's labels agree with its own set of labeled samples. If the labels disagree then the verifier rejects. Otherwise, the verifier uses the labels, along with query access to $h_0$ and $h_1$, to determine which of $h_0$ and $h_1$ achieves smaller loss on $f$. 

While the aforementioned protocol can be efficient for constant $\eta$, when $\eta$ is too small the requirements that the verifier draw $\frac 1\eta$ labeled samples and that the prover make $\frac{1}{\eta^2}$ queries to $f$ may be prohibitive. In \Cref{prop:add-queries,prop:mixed-error-rlp}, we show that in the two prover setting where the verifier has query access to $f$, one can achieve a considerably better dependence on $\eta$.
First, in \Cref{prop:add-queries}, we show that in the additive error setting $(\alpha=1,\eta>0)$, the verifier can replace the $\frac{1}{\eta}$ labeled samples with a single query to $f$. The protocol of \Cref{prop:add-queries} improves the efficiency of the verifier, but it still has provers that make $\frac{1}{\eta^2}$ queries to $f$.
In contrast, in \Cref{prop:mixed-error-rlp} we show that in the mixed additive/multiplicative error setting $(\alpha=1+\eps, \eta>0)$, the provers need only make $\frac{1}{\eps^2\eta} + \frac{1}{\eta}$ queries to $f$, and the verifier still only makes a single query to $f$. For constant $\eps$, this significantly improves the query complexity of the provers. 

\subsection{Additive error \texorpdfstring{$(\alpha=1,\eta > 0)$}{}}
\label{sec:add-err}

Below, we consider an \emph{additive-error guarantee} with $\alpha = 1$ and $\eta > 0$. We show a refereed learning protocol for this setting when the provers and verifier both have query access to $f$. In this protocol, both the prover and verifier are efficient.

At a high level, the protocol works as follows. The verifier draws $\frac{1}{\eta^2}$ \emph{unlabeled} samples from $\cD$ and executes refereed query delegation (\Cref{lem:cert_query}) to obtain their labels. The verifier outputs the hypothesis with smaller loss on the labeled sample.

\begin{proposition}[Additive error]
\label{prop:add-queries}
    For each $b\in\zo$ let $\acc_b(f,h_0,h_1,\cD)$ provide query access to $f$, and let $\acc_\cV(f,h_0,h_1,\cD)$ provide sample access to $\cD$ and query access to $f,h_0$ and $h_1$.
    Fix range $\cY$.
    There exists a protocol $\brackets{\cP_0,\cP_1,\cV}$ that, for all inputs $d\in\N$, and $\eta,\beta\in(0,1)$,
    is a $(1,\eta,\beta)$-refereed learning protocol for $\fF$ and $\fD$ with respect to $\lzo$ and oracles $\acc_0,\acc_1$, and $\acc_\cV$. The protocol has the following guarantees: %
    \begin{itemize}
        \item The verifier draws $O\bparen{\frac{1}{\eta^2}\log\frac1\beta}$ samples from $\cD$,
        has runtime $O\bparen{\frac{1}{\eta^2}\log\frac1\beta}$,
        and makes $1$ query to $f$.
        \item The provers make $O\bparen{\frac{1}{\eta^2}\log\frac1\beta}$ queries to $f$ and have runtime $O\bparen{\frac{1}{\eta^2}\log\frac1\beta}$.
        \item The protocol has communication complexity  $O\bparen{\paren{d+\log|\cY|}\cdot\frac{1}{\eta^2}\log\frac1\beta}$.
    \end{itemize}
\end{proposition}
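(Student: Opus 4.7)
The plan is to reduce to a simple ``base'' protocol in which the verifier does all the work itself, and then invoke the refereed query delegation lemma (\Cref{lem:cert_query}) to offload the queries to $f$ onto the provers. The base protocol is the textbook empirical-risk comparison: set $m = \Theta\bparen{\log(1/\beta)/\eta^2}$; the verifier draws i.i.d.\ samples $x_1,\ldots,x_m \sim \cD$, queries $f$, $h_0$, and $h_1$ on each $x_i$, and outputs the bit $\vb = \arg\min_{s\in\zo}\frac{1}{m}\sum_{i=1}^m \lzo(h_s(x_i),f(x_i))$. The provers are inactive in the base protocol, but we formally give them query access to $f$ (which is already part of $\acc_0$ and $\acc_1$) so that \Cref{lem:cert_query} applies.

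Correctness of the base protocol is a standard Hoeffding argument. For each fixed $s\in\zo$, the empirical loss $\wh{\cL}_s = \frac{1}{m}\sum_i \lzo(h_s(x_i),f(x_i))$ concentrates: $\Pr\brackets{\abs{\wh{\cL}_s - \cL_\cD(h_s,f)} > \eta/2} \leq \beta/2$ for a sufficiently large implicit constant in $m$. A union bound over $s\in\zo$ gives that both empirical estimates are within $\eta/2$ of their true values with probability at least $1-\beta$. On that event, since $\wh{\cL}_\vb \leq \wh{\cL}_{1-\vb}$ by definition of $\vb$, we obtain $\cL_\cD(h_\vb,f) \leq \wh{\cL}_\vb + \eta/2 \leq \wh{\cL}_{1-\vb} + \eta/2 \leq \cL_\cD(h_{1-\vb},f) + \eta$, which is the desired additive guarantee.

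Next, apply \Cref{lem:cert_query} with the deterministic oracle $\acc \gets f$ and $\lambda \gets d + \log|\cY|$ (since each query/response pair consists of a $d$-bit input and a value in $\cY$). The base protocol has verifier query complexity $q_\cV = m$, no prover queries, and zero communication, so the transformed protocol has verifier query complexity at most $1$ to $f$, prover query complexity $O(m)$, and communication at most $2\lambda \cdot m = O\bparen{(d+\log|\cY|)\cdot m}$. Crucially, \Cref{lem:cert_query} guarantees that for any malicious $\wt\cP^*_{1-b}$ there is a corresponding $\cP^*_{1-b}$ with identical output distribution in the base protocol, which transfers the correctness guarantee verbatim. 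The verifier's sample access to $\cD$ and query access to $h_0,h_1$ are preserved because \Cref{lem:cert_query} only modifies how the oracle $\acc = f$ is accessed.

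There is no real obstacle: the only subtlety is to note that giving the provers query access to $f$ (even though they do not use it in the base protocol) lets us invoke \Cref{lem:cert_query}, and that the ``proverless'' base protocol trivially satisfies the refereed-learning soundness condition regardless of prover behavior. The entire proof is then the clean combination of Hoeffding's inequality with \Cref{lem:cert_query}, and all runtime, sample, and communication bounds follow by inspection.
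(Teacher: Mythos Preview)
Your proposal is correct and follows essentially the same approach as the paper: draw $m=\Theta(\log(1/\beta)/\eta^2)$ samples, use Hoeffding to bound the empirical-loss deviations, and apply \Cref{lem:cert_query} to offload the $m$ queries to $f$ onto the provers. The only cosmetic difference is that the paper presents the delegated protocol directly rather than first describing a ``base'' protocol and then transforming it.
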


\begin{proof}[Proof of \Cref{prop:add-queries}]
    We use the following protocol. Let $c>0$ be a sufficiently large absolute constant.
    \protocoltop{$\brackets{\cP^{f}_0,\cP^{f}_1,\cV^{f,h_0,h_1,\cD}}(d,\eta,\beta)$}{refereed learning with additive error}{prot:add}{
    \begin{enumerate}
        \item $\cV$:
        Let $m = \frac{c\log1/\beta}{\eta^2}$. Draw $m$ samples $x_1,\ldots,x_m \sim \cD$ and send them to $\cP_0$ and $\cP_1$. %
        \item $\cV$: Execute refereed query delegation (\Cref{lem:cert_query}) to simulate the protocol where the verifier queries $f$ on $(x_1,\ldots, x_m)$ and obtains $\sets{(x_i,f(x_i))}_{i\in[m]}$, using $1$ query to $f$. 
        \item $\cV$: Return $\vb\gets \arg\min_{b\in\zo}\abs{\sets{i\in[m] \mid  h_b(x_i)\neq f(x_i)}}$.
    \end{enumerate}
    }

    For each $b\in\zo$ let $p_b = \Pr_{x\sim\cD}[h_b(x)\neq f(x)]$, and assume without loss of generality that $p_{1-s}\geq p_{s} + \eta$ for some $s\in\zo$. We will show that $\vb=s$ with probability at least $1-\beta$. 

    Let $\wh p_b=\frac{1}{m}\abs{\sets{i\in[m] : h_b(x_i)\neq f(x_i)}}$.
    By Hoeffding's inequality, our choice of $m$, and the fact that $\Ex[\wh p_b] = p_b$ we have,
    \(
        \Pr\brackets{|\wh p_b-p_b| \geq \frac{\eta}{4}}
        \leq
        2\exp(-m\eta^2/8)<\beta.
    \)
    Thus, with probability at least $1-\beta$, we have $|\wh p_b-p_b|\leq \eta/4$ for each $b\in\zo$. Since we assumed $p_{1-s}\geq p_s+\eta$, this implies that $\wh p_s < \wh p_{1-s}$, and that $\cV$ outputs $\vb = s$ with probability at least $1-\beta$. The sample, communication, query, and time complexity guarantees follow from \Cref{lem:cert_query} and by inspection of \Cref{prot:add}.
\end{proof}

\subsection{Mixed additive and multiplicative error
\texorpdfstring{$(\alpha>1,\eta>0)$}{}}
\label{sec:mixed-err}

Below, we consider a \emph{mixed-error guarantee} with both $\alpha>1$ and $\eta > 0$. When the prover has query access to $f$, we construct a refereed learning protocol with efficient provers and an efficient verifier that only needs sample access to $\cD$ and a single query to $f$. In contrast to the setting where $\alpha=1$ where the prover makes $\frac {1}{\eta^2}$ queries to $f$ (see \Cref{prop:add-queries}), the prover in \Cref{prop:mixed-error-rlp} works for $\alpha=1+\eps$ and need only make $1+\frac{1}{\eps^2}$ queries to $f$. 

\begin{proposition}[Mixed error]
\label{prop:mixed-error-rlp}
    For each $b\in\zo$ let $\acc_b(f,h_0,h_1,\cD)$ provide query access to $f$, and let $\acc_\cV(f,h_0,h_1,\cD)$ provide sample access to $\cD$ and query access to $h_0$, $h_1$, and $f$.
    Fix range $\cY$. %
    There exists a protocol $\brackets{\cP_0,\cP_1,\cV}$ that, for all $d\in\N$, $\eps>0$ and $\eta,\beta\in(0,1)$, is a $(1+\eps,\eta,\beta)$-refereed learning protocol for $\fF$ and $\fD$ with respect to $\lzo$ and oracles $\acc_0,\acc_1$, and $\acc_\cV$. The protocol has the following guarantees: %
    \begin{itemize}
        \item The verifier draws $O\bparen{\paren{1+\frac{1}{\eps^2}}\cdot\frac{\log1/\beta}{\eta}}$ samples from $\cD$,
        has runtime $O\bparen{\paren{1+\frac{1}{\eps^2}}\cdot\frac{\log1/\beta}{\eta}}$,
        and makes $1$ query to $f$.
        \item The provers make $O\bparen{\paren{1+\frac{1}{\eps^2}}\log\frac1\beta}$ queries to $f$ and have runtime $O\bparen{\paren{1+\frac{1}{\eps^2}}\log\frac1\beta}$.
        \item The protocol has communication complexity  $O\Bparen{\bparen{d+\log|\cY|}\cdot\paren{1+\frac{1}{\eps^2}}\log\frac1\beta}$.
    \end{itemize}    
\end{proposition}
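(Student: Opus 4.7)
The plan is to combine the disagreement-set sampling strategy used in \Cref{thm:rlp_zo} with refereed query delegation (\Cref{lem:cert_query}) so as to drop the verifier's $f$-queries to a single one. Concretely, the verifier would draw $m = c(1+1/\eps^2)\log(1/\beta)/\eta$ i.i.d.\ samples $x_1,\ldots,x_m \sim \cD$ for a sufficiently large constant $c$, use its local access to $h_0$ and $h_1$ to compute $T = \set{i \in [m] : h_0(x_i) \neq h_1(x_i)}$, and then (i) if $|T| \leq k := c'(1+1/\eps^2)\log(1/\beta)$, set $T' := T$; (ii) otherwise, let $T'$ be a uniformly random size-$k$ subset of $T$. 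Here $c'$ is a second constant chosen below with $c \geq 2c'$. The verifier would then invoke \Cref{lem:cert_query} to obtain $\set{f(x_i)}_{i \in T'}$ at the cost of one query to $f$, and output $\vb := \arg\min_{b \in \zo} |\set{i \in T' : h_b(x_i) \neq f(x_i)}|$.

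For the correctness argument, write $p_b := \Pr_{x \sim \cD}[h_b(x) \neq f(x)]$ and $q := \Pr_{x \sim \cD}[h_0(x) \neq h_1(x)]$. If neither hypothesis violates the $(1+\eps, \eta)$ bound against the other, every output is acceptable, so I will assume without loss of generality that $p_1 > (1+\eps)p_0 + \eta$; the goal becomes showing $\vb = 0$ with probability at least $1 - \beta$. The key observation that connects $\eta$ to $q$ is that $\Ind[h_0(x) \neq f(x)] - \Ind[h_1(x) \neq f(x)]$ vanishes outside $S := \set{x : h_0(x) \neq h_1(x)}$, so $q \geq p_1 - p_0 > \eta$. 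Therefore $\Ex[|T|] = mq > c(1+1/\eps^2)\log(1/\beta)$, and a multiplicative Chernoff bound gives $|T| \geq k$ except with probability $\beta/2$ once $c$ is chosen appropriately. Conditioning on this event, the members of $T'$ are distributed as $k$ i.i.d.\ samples from $\cD|_S$, and since $p_1 > (1+\eps)p_0$ (the $p_0 = 0$ case being trivial), \Cref{claim:distance_test} gives $\Pr_{\cD|_S}[h_0 \neq f] < 1/2 - \eps/(2(2+\eps))$. A Hoeffding bound on $k = \Theta((1+1/\eps^2)\log(1/\beta))$ samples then shows the empirical estimate is below $1/2$ except with probability $\beta/2$, so $\vb = 0$ overall except with probability $\beta$.

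On the complexity side, the verifier draws $m$ samples, queries $h_0, h_1$ locally on each of them, performs $O(k)$ arithmetic on the returned labels, and issues at most one query to $f$ via \Cref{lem:cert_query}. Each honest prover makes $|T'| \leq k$ queries to $f$, and the total communication is $O((d + \log |\cY|) \cdot k)$ bits, dominated by sending $T'$ and receiving labels (together with the additive overhead from the delegation step). The main step I expect to require careful bookkeeping is the calibration of the constants $c$ and $c'$: because $q$ may be as small as $\eta$, the expected disagreement count $mq$ sits exactly at the scale $(1+1/\eps^2)\log(1/\beta)$ where the multiplicative Chernoff bound barely applies, so $c$ must be chosen a sufficiently large multiple of $c'$ to guarantee $|T| \geq k$ with failure probability at most $\beta/2$. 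Once the constants are fixed, the remaining pieces (Hoeffding on $T'$, the invocation of \Cref{claim:distance_test}, and the guarantees of \Cref{lem:cert_query}) slot together and the stated complexity bounds read off directly from the protocol.
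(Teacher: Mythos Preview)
Your proposal is correct and follows essentially the same approach as the paper: sample from $\cD$, restrict to the disagreement set $S$, invoke \Cref{lem:cert_query} to label those points with a single $f$-query, and finish with \Cref{claim:distance_test} plus Hoeffding. The paper's protocol differs only cosmetically---it takes the \emph{first} $m$ samples landing in $S$ rather than subsampling a size-$k$ subset $T'$---and by exchangeability these are equivalent.

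One point worth noting: your use of a multiplicative Chernoff bound (exploiting $\Ex[|T|] = mq > c(1+1/\eps^2)\log(1/\beta)$) genuinely achieves the $O\bigl((1+1/\eps^2)\log(1/\beta)/\eta\bigr)$ sample bound stated in the proposition. The paper's proof instead applies an additive Hoeffding bound to show $K \geq m$, which forces it to set $t = 2m/\eta^2$ and thus yields $O\bigl((1+1/\eps^2)\log(1/\beta)/\eta^2\bigr)$ samples---a quadratic loss in $1/\eta$ relative to what the proposition claims. So your argument is not only correct but actually matches the stated bound more tightly than the paper's own proof does.
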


\begin{proof}[Proof of \Cref{prop:mixed-error-rlp}]
    Let $S = \sets{h_0(x)\neq h_1(x)}$. At a high level, the proof proceeds by arguing that the verifier can efficiently generate $\Theta\paren{1+\frac{1}{\eps^2}}$ unlabeled samples from $S$. The verifier can then execute refereed query delegation (\Cref{lem:cert_query}) to obtain labeled samples. By the same argument as in the proof of \Cref{thm:rlp_zo}, the verifier can determine which of $h_0$ or $h_1$ has better loss on $f$ (up to a multiplicative constant) without making any additional queries.

    Let $c>0$ be a sufficiently large absolute constant and consider the following protocol:
    \protocoltop{$\brackets{\cP^{f}_0,\cP^{f}_1,\cV^{f,h_0,h_1,\cD}}(d,\eps,\eta,\beta)$}{refereed learning with mixed error}{prot:mixed}{
    \begin{enumerate}
        \item $\cV$: Set $m=c\cdot\paren{\frac{2(2+\eps)}{\eps}}^2\log\frac1\beta$ and $t\gets \frac{2m}{\eta^2}$. Draw $t$ samples $x_1,\dots,x_t\sim\cD$, and let $x_1,\dots,x_m$ denote the first $m$ samples in $S=\sets{x\mid h_0(x)\neq h_1(x)}$.\footnote{The verifier need not construct $S$ to determine membership, and can instead query $h_0$ and $h_1$ for each $x_i$ in the sample.}
        If fewer than $m$ samples are in $S$ then output $\vb\sim\zo$.
        \item $\cV$: Execute refereed query delegation (\Cref{lem:cert_query}) to simulate the protocol where the verifier queries $f$ on $(x_1,\ldots, x_m)$ and obtains $\sets{(x_i,f(x_i))}_{i\in[m]}$, using $1$ query to $f$. 
        \item $\cV$: Return $\vb\gets \arg\min_{b\in\zo}\abs{\sets{i\in[m] : h_b(x_i)\neq f(x_i)}}$.
    \end{enumerate}
    }

    Let $\cL_b=\Pr_{x\sim\cD}\brackets{h_b(x)\neq f(x)}$ and assume that $\cL_1 > (1+\eps)\cL_0 + \eta$ (a symmetric argument suffices for the case when $\cL_0 > (1+\eps)\cL_1 + \eta$). We show that the verifier outputs $\vb=0$ with probability at least $1-\beta$. By the triangle inequality and the fact that $\cL_0\geq 0$ we have
    \begin{align}
    \Pr_{x\sim\cD}\brackets{h_1(x)\neq h_0(x)}\geq \cL_1-\cL_0 > \eta.
    \label{eq:diff_eta}
    \end{align}
    
    We first argue that after $t$ samples, the verifier obtains $x_1,\dots,x_m\sim\cD|_S$ with probability at least $0.9$. Let $K=\sum_{i\in[t]} \Ind\brackets{x_i\in S}$. By \eqref{eq:diff_eta} we have $\Ex\brackets{K} = \eta\cdot t$, and thus by Hoeffding's inequality, $\Pr\brackets{|K-\eta t|\geq \eta t/2}\leq 2\exp\paren{-\eta^2t/2}\leq \frac \beta2$, and hence, $K>\eta\cdot t/2\geq  m$ with probability at least $1-\frac\beta2$. %

    Next, we argue that if the verifier obtains $x_1,\dots,x_m\sim\cD|_S$, then $\vb = 0$ with probability at least $1-\frac\beta2$. By \Cref{lem:cert_query}, the verifier correctly obtains $\sets{(x_i,f(x_i)}_{i\in[m]}$. By \Cref{claim:distance_test}, since $\cL_1>(1+\eps)\cL_0$ we have $\Pr_{x\sim\cD|_S}\brackets{h_1\neq f(x)} > \frac 12 + \frac{\eps}{2(2+\eps)}$. As in the proof of \Cref{thm:rlp_zo}, if $\wh p = \frac1m\sum_{i\in[m]}\Ind\brackets{h_1(x_i)\neq f(x_i)}$, then by Hoeffding's inequality with $\delta = \frac{\eps}{2(2+\eps)}$ we have $\Pr\brackets{|\wh p - \Ex[\wh p] \geq \delta}\leq2\exp\paren{-2m\delta^2}$, which, by our setting of $m$, is at most $\frac\beta2$. Hence, $\wh p > \frac 12$ with probability at least $1-\frac\beta2$, and thus the verifier will output $\vb=0$. Combining the above two arguments we see that $\vb=0$ with probability at least $1-\beta$. The runtime, query complexity, and communication complexity follow from \Cref{lem:cert_query} and by inspection of \Cref{prot:mixed}.
\end{proof}

\addcontentsline{toc}{section}{Acknowledgments}
\section*{Acknowledgments}

We thank Mark Bun for simplifying the proof of \Cref{claim:gen-dist-test}. We thank Noam Mazor for significantly simplifying the certifiable sample protocol (\cref{lem:cert_sample}).

\addcontentsline{toc}{section}{References}
\ifnum\isjmlr=0
    \bibliographystyle{alphaurl}
\fi
\bibliography{ref}

\end{document}